\documentclass[12pt]{article}
\usepackage{preprint-layout} % estetics
\usepackage{preprint-notation} % standard notation
\usepackage{hyperref}
\usepackage[english]{babel}
\usepackage{doi}
%%%%%%%%%%%%%%%%%%%%%%%%%%%%%%%%%%%%%%

\title{The Geometric Structure of Fully-Connected ReLU Layers}

\author{Jonatan Vallin, \quad Karl Larsson, \quad Mats G. Larson}

\date{\today}
\begin{document} 

\maketitle

\begin{abstract}
We formalize and interpret the geometric structure of $d$-dimensional fully connected ReLU layers in neural networks. The parameters of a ReLU layer induce a natural partition of the input domain, such that the ReLU layer can be significantly simplified in each sector of the partition. This leads to a geometric interpretation of a ReLU layer as a projection onto a polyhedral cone followed by an affine transformation, in line with the description in \cite[\doi{10.48550/arXiv.1905.08922}]{carlsson2019geometry} for convolutional networks with ReLU activations. Further, this structure facilitates simplified expressions for preimages of the intersection between partition sectors and hyperplanes, which is useful when describing decision boundaries in a classification setting. We investigate this in detail for a feed-forward network with one hidden ReLU-layer, where we provide results on the geometric complexity of the decision boundary generated by such networks, as well as proving that modulo an affine transformation, such a network can only generate $d$ different decision boundaries. Finally, the effect of adding more layers to the network is discussed.
\end{abstract}

%\paragraph{Keywords} Deep learning $\cdot$ Neural network $\cdot$ ReLU $\cdot$ Supervised learning $\cdot$ Binary classification $\cdot$ Decision boundary

%\paragraph{Mathematics Subject Classification} 68T07 $\cdot$

\section{Introduction}

The most popular non-linear activation functions in deep learning today are variants of the Rectified Linear Unit (ReLU) -- in its standard form defined $\mathrm{ReLU}(x) = \max(0,x)$.
This popularity is due to its state-of-the-art performance, both regarding efficiency, thanks to computational simplicity, and training, thanks to mostly linear behavior and alleviation of issues such as vanishing gradients \cite{deepsparse2011, DUBEY202292}.
The focus of this paper are fully-connected layers with ReLU activations, herein denoted \emph{ReLU layers}, on the from
\begin{align} \label{eq:fundmap}
\boxed{T(x) = \mathrm{ReLU}(Ax + b)}
\end{align}
where the $\mathrm{ReLU}$ is applied component-wise and the matrix $A$ and vector $b$
are the layer's training parameters.
These are fundamental building blocks of common deep network architectures, for instance, convolutional and feed-forward networks, and an increased theoretical understanding of their behavior is key for fully understanding the properties of the architectures that include them.

\paragraph{Contributions.}
In \cite{carlsson2019geometry} Carlsson provides a geometric description for the action of a layer on the form \eqref{eq:fundmap}, and uses it to give a procedure for computing preimages of deep convolutional networks. We here further formalize and interpret this geometric description, which we then use to show properties of decision boundaries generated by feed-forward networks. Our main contributions are summarized in the following points.
\begin{itemize}
\item We formalize the geometric description of a ReLU layer in \cite{carlsson2019geometry} by introducing a detailed set notation for the natural partitions of the layer domain and codomain induced by the parameters.
This facilitates explicit expressions for the images and preimages of the layer and a geometric interpretation of a ReLU layer as a projection onto a polyhedral cone followed by an affine transformation.
The description is generalized to include contracting ReLU layers where the dimension is reduced. In an upcoming paper, this geometric description will serve as a basis for deriving error bounds for approximating hypersurfaces by decision boundaries of deep ReLU networks.
\end{itemize}
We utilize the formalized description in a binary classification problem, where the decision boundary separating two classes is defined as the zero-contour of a $d$-dimensional fully-connected feed-forward network with one hidden ReLU layer $T:\IR^d\to\IR^d$ and a final affine transformation $L:\IR^d\to\IR$. With the exception of very specific parameter configurations, we prove that:
\begin{itemize}
\item The number of linear pieces of the decision boundary is precisely $2^d-2^m$, where $m$ is an integer given by the parameters in the ReLU layer via the geometric description.
\item Modulo an affine transformation, such a network can only generate $d$ different decision boundaries.
\end{itemize}
Further, we discuss how the class of decision boundaries is affected by adding additional hidden ReLU layers to the network.

\paragraph{Previous Works.}
While there is an abundance of empirical studies of various properties of networks with ReLU layers, we here mainly focus on theoretical results for finite ReLU networks on the form
\begin{align}
F(x) = L \circ T^{(N)} \circ \dots \circ T^{(1)}(x)
\end{align}
The basis of many studies, including the present work, is the fundamental observation that ReLU networks are continuous piecewise linear functions. A measure of geometric complexity is the number of linear pieces such networks produce, which can grow exponentially in the number of layers \cite{montufar2014number}. These pieces are however highly dependent of each other, and in practice, deep networks may only use a portion of their theoretical maximum expressiveness \cite{hanin2019deep}. By specific parameter choices ReLU networks can be constructed to represent the maximum operation, from which it can be deduced that any continuous piecewise linear function can be represented by a ReLU network with sufficiently many parameters \cite{pmlr-v28-goodfellow13, hanin2017approximating, pmlr-v80-balestriero18b}, which in turn implies that such networks can approximate smooth functions. 

From a geometric point of view, a deep neural network can be seen as a sequence of mappings that gradually transforms seemingly geometrically complex input data to something manageable \cite{10.1007/978-3-642-24412-4_3}, where each layer in general simplifies the data's shape \cite{basri2017efficient,10.1007/978-3-540-45080-1_66}, or even it's topology \cite{10.5555/3455716.3455900}. A geometric description of how a ReLU layer on the form \eqref{eq:fundmap} transforms the data, as a projection onto a polyhedral cone followed by an affine transformation, is given in \cite{carlsson2019geometry}. This description is based on a dual basis induced by a geometric interpretation of the layer's parameters \cite{CarlssonARSS17}, and is utilized for computing preimages of convolutional networks.

As the decision boundaries of deep neural networks characterize the learned classifier, it is essential to understand their mathematical properties including their geometry and complexity. For instance, the works \cite{fawzi2018empirical, moosavi2019robustness} present a connection between geometrical properties of the decision boundaries and the robustness of the classifier. The authors provide evidence that there is a strong relation between the sensitivity of perturbation of the input data and large curvature of the decision boundary of the network. Along this line, the authors of \cite{liu2022some} use tools from differential geometry to derive sufficient conditions on the network parameters for producing flat or developable decision boundaries. They also provide a method to compute topological properties of the decision boundary.

Another geometric viewpoint is presented in  \cite{pmlr-v80-zhang18i}, where ReLU networks are described in terms of tropical geometry. This provides a connection between properties of the network and tropical geometric objects, where for instance the ReLU layer \eqref{eq:fundmap} is characterized by the tropical zonotopes. This work was recently extended in \cite{alfarra2022decision} where they provide a geometrical description of the decision boundary for a shallow network model. They prove that the decision boundary is contained in the convex hull of two zonotopes derived from the network parameters.

\paragraph{Outline.} In Section~\ref{sec:2}, we describe the structure of the mapping defined by a standard ReLU layer. Inspired by \cite{carlsson2019geometry}, we introduce a convenient dual basis obtained through the parameters in the layer, and using this dual basis, we construct a partition of the input space, allowing us to describe the action of the mapping explicitly.
In Section~\ref{sec:3}, we utilize this description in a classification setting. We characterize in detail the geometry of decision boundaries generated by a shallow ReLU network and provide a high-level description of the effect on decision boundaries when adding more layers to a network.
In Section~\ref{sec:conclusion}, we summarize our findings.

\section{The Geometrical Structure of a ReLU Layer}
\label{sec:2}
In this section, we analyze the geometrical structure of fully-connected ReLU layers and derive expressions for their preimages.
The map of a ReLU layer \eqref{eq:fundmap} can be written
\begin{align}
\boxed{
T:\IR^d \ni x \mapsto \mathrm{ReLU}(A_b(x)) \in \IR^d_+
%= \{x \in \IR^d \, | \, x_i \geq 0 , \, i=1 ,\dots,d \}
}
\label{eq:map}
\end{align}
where $\IR^d_+$ denotes the non-negative orthant in $\IR^d$ and $A_b:\IR^d\to\IR^d$ is the affine map
\begin{equation}
\boxed{
A_b(x)=Ax+b
}
\label{eq:A_b}
\end{equation}
with parameters $A\in \IR^{d\times d}$ and $b\in \IR^d$.
%Note that we, for brevity, have omitted the subscript index on $T$ as we are considering a single ReLU layer.

\paragraph{Geometry of the Affine Map.}
We begin by giving a geometric interpretation of the parameters in the affine map \eqref{eq:A_b}.
Let row $i$ in the affine map be denoted $\rho_i(x)$, i.e.,
\begin{align}
\rho_i(x) = a_i \cdot x + b_i  = (A x + b)_i \quad \text{for } i=1,\hdots,d
\end{align}
where $a_i\in \IR^d$ is the $i$:th row of the matrix $A$ and $b_i\in \IR$ is the $i$:th element of the vector $b$. The zero levels for $\{\rho_i\}_{i=1}^d$ define hyperplanes 
\begin{align}
P_{i}= \{x\in \IR^d : \rho_i(x) = 0 \}  \quad \text{for } i=1,\hdots,d
\label{eq:hyppp}
\end{align}
with normals $a_i$, and we let the sign of $\rho_i$ define half-spaces
\begin{align}
U_{i,+} = \{x\in \IR^d : \rho_i(x) > 0 \},
\qquad 
U_{i,-} = \{x\in \IR^d : \rho_i(x) < 0 \}
\end{align}
Note that $\rho_i$ is a scaled version of the signed distance function associated with $P_i$, which is positive 
on $U_{i,+}$ (the half-space into which $a_i$ is directed) and negative on $U_{i,-}$. Further, assume that $\{a_i\}_{i=1}^d$ spans $\IR^d$ so that the hyperplanes are in general position. Then, the intersection of the hyperplanes is a point $x_0 = \cap_{i=1}^d P_i$, which is the unique solution to the linear system of equations 
\begin{align}
A x_0 + b = 0
\end{align} 
Let $I = \{ 1, \dots, d\}$ and define $L_i$ to be the line 
\begin{align}
L_i = \bigcap_{j \in I \setminus \{i\} } P_j
\end{align}
passing through $x_0$. Since the hyperplanes are assumed to be in general position, the line $L_i$ and the hyperplane $P_i$ will only coincide at $x_0$. Nevertheless, for $j\neq i$ we have that $L_i\subset P_j$ by definition. In accordance with the work in \cite{carlsson2019geometry}, for $i\in I$ we will let $a_i^*\in \IR^d$ be a vector parallel to $L_i$ directed such that $x_0 + a_i^*\in U_{i,+}$. Hence, $a_j \cdot a^*_i=0$ for $j\neq i$ and $a_i\cdot a^*_i>0$, and by assigning a length to each vector $a_i^*$ these vectors become uniquely determined, which we summarize in the following definition.

\begin{definition}[Dual Basis] \label{def:dual-basis}
Given an invertible matrix $A\in \IR^{d\times d}$ with rows $a_i\in \IR^d$, we define the set of vectors $\{a_i^*:i\in I\}$ satisfying
\begin{align}
\label{eq:deltaij}
\boxed{
a_j \cdot a_i^*=\delta_{ij} \quad \text{for} \quad i,j\in I
}
\end{align}
and denote this set the dual basis of $A$.
\end{definition}
Since the vectors $\{a_i : i\in I\}$ are assumed to be linearly independent, the vectors in the dual basis $\{a_i^* : i \in I\}$ will also be linearly independent, and hence the dual basis is also a basis in $\IR^d$. The dual basis will be useful for describing the action of a ReLU layer \eqref{eq:map}. Figure~\ref{fig:dualbasis} depicts the geometrical construction of the dual basis. Algebraically, the vector $a_i^*$ is the $i$:th column vector of the inverse matrix $A^{-1}$ and, hence, the action of the matrix $A$ on the vector $a_i^*$ is simply
\begin{align}
A a_i^* = e_i
\label{eq:dual}
\end{align}
where $e_i$ is the $i$:th basis vector in the standard Euclidean basis.
By expanding $x\in \IR^d$ in the dual basis, such that
\begin{align}
x = x_0 + \sum_{i\in I} \lambda_i a_i^*
\label{eq:dualexpansion}
\end{align}
with coefficients $\lambda_i \in \IR$, and applying the affine map \eqref{eq:A_b} we have
\begin{align}
A_b(x) =
A x + b = \underbrace{A x_0 + b}_{=0} +    \sum_{i\in I} \lambda_i (A a_i^*)
=
\sum_{i\in I} \lambda_i e_i=[\lambda_1,\lambda_2,\hdots,\lambda_d]^T
\label{eq:affine}
\end{align}
Thus, applying the affine map yields a vector with the coefficients of the expansion in the dual basis $\{a_i^*:i\in I\}$ as its elements.

\begin{figure}
\centering
\includegraphics[width=0.5\linewidth]{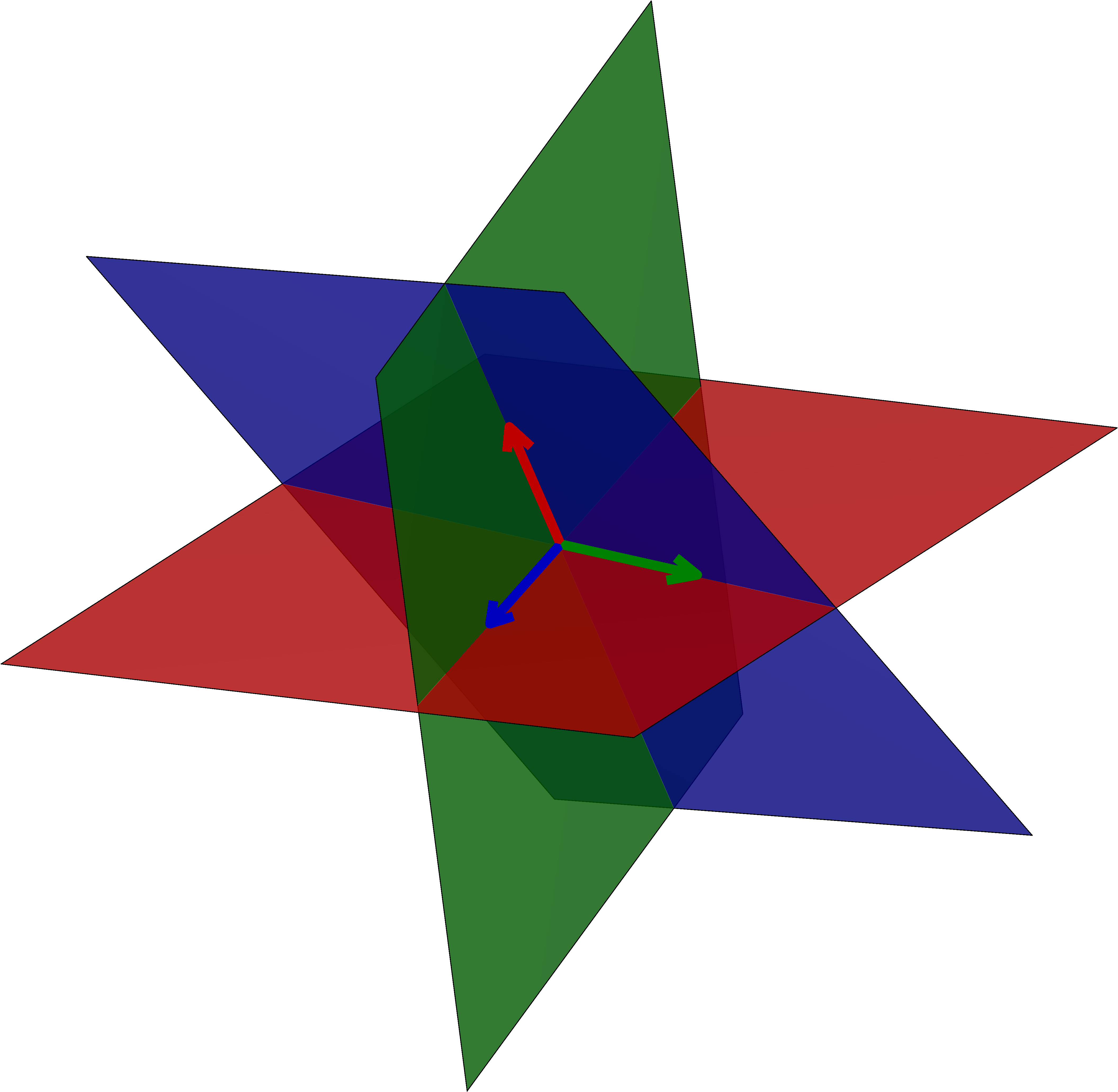}
\caption{\emph{Dual Basis.} The geometrical construction of the dual basis given a set of hyperplanes in $\IR^3$. The dual vectors are color-coded such that $a^*_i$ has the same color as the hyperplane with normal $a_i$. A dual vector $a^*_i$ is parallel to the line given by the intersection of the set of hyperplanes $\{P_j:j\in I\setminus \{i\}\}$ where $P_i$ is removed.}
\label{fig:dualbasis}
\end{figure}

\subsection{Partition of the Domain and Codomain}
In this section, we will introduce a partition of the ReLU layer domain --- the $\IR^d$ input, and a partition of the ReLU layer codomain -- the $\IR^d$ output.
%These partitions are constructed such that, within each part, the description of the ReLU layer's action is simple.
These partitions will be useful in describing the action of the layer.
Recall that $I = \{ 1, \dots, d\}$ and consider two disjoint index subsets $I_+,I_- \subseteq I$ where $I_+\cap I_- = \emptyset$, denoted by the pairing $\bfI = (I_+, I_-)$.
Let $\mcI$ be the set of all such pairings. 

\paragraph{Domain Partition.}
For a given $(I_+,I_-)\in \mcI$ we define the following subset of $\IR^d$
\begin{align}
S_{(I_+,I_-)} = \bigg\{ x\in \IR^d:x=x_0+\sum_{i \in I_+} \alpha_i a_i^* - \sum_{i \in I_-} \alpha_i a_i^*, \text{ with } \alpha_i>0\bigg\}
\label{eq:S}
\end{align}
The family of all such sets
\begin{align}
\mcS = \{ S_{\bfI} \subset \IR^d \, : \, \bfI \in \mcI \}
\end{align}
will serve useful in describing the action of the ReLU layer in different parts of the domain.
We first verify that $\mcS$ constitutes a partition of $\IR^d$.
%For $\bfI,\bfJ \in \mcI$ with $\bfI \neq \bfJ$ we have $S_{\bfI}\cap S_{\bfJ}=\emptyset$ and in the special case $\bfI=(\emptyset,\emptyset)$ we simply get the singleton $S_{\bfI}=\{x_0\}$.
Consider a general point $x\in \IR^d$ expanded in the dual basis \eqref{eq:dualexpansion}, and define the pairing
\begin{align}
(I_+,I_-) =
\bigl(\{i\in I:\lambda_i>0\}, \{i\in I:\lambda_i<0\}\bigr)
\end{align}
The point $x\in \IR^d$ can then be expanded on the form
\begin{align}
x
=x_0 + \sum_{i \in I} \lambda_i a_i^*
=x_0+\sum_{i \in I_+} \alpha_i a_i^* - \sum_{i \in I_-} \alpha_i a_i^*
\label{eq:expansion}
\end{align}
with coefficients $\alpha_i=|\lambda_i|>0$ for $i\in I_+ \cup I_-$,
and we recognize this as the structure of points in \eqref{eq:S}.
Hence, each point in $\IR^d$ belongs to precisely one $S_{\bfI}$, which in turn means that $\mcS$ defines a partition of $\IR^d$, such that
\begin{align}
\IR^d = \bigcup_{\bfI \in \mcI} S_{\bfI}
\qquad\text{and}\qquad
\emptyset = S_{\bfI}\cap S_{\bfJ} \quad \text{for} \quad \bfI\neq \bfJ
\label{eq:partitionS}
\end{align}
When we refer to the dimension of a set $S_{(I_+,I_-)}\in \mcS$ we mean the dimension of the subspace span$(\{a_i^*:i\in I_+\cup I_-\})$, and since the dual vectors $a_i^*$ are assumed to be linearly independent we simply get $\dim(S_{(I_+,I_-)})=|I_+\cup I_-|$. By a simple combinatorial argument, it is easy to verify that the number of $k$ dimensional sets in the partition $\mcS$ is precisely equal to $\binom{d}{k}2^k$, so the total number of sets in $\mcS$ is
\begin{align}
|\mcS|=\sum_{k=0}^d\binom{d}{k}2^k=3^d
\end{align}
by the binomial theorem. An example of a partition $\mcS$ generated by a dual basis in $\IR^2$ is illustrated in Figure \ref{fig:dualbasispartition}.
\begin{figure} 
\centering
\begin{subfigure}[b]{0.35\textwidth}
\centering
\includegraphics[width=\textwidth]{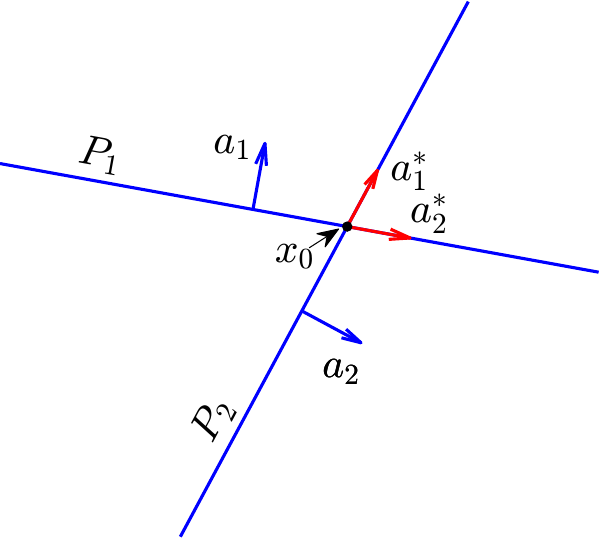}
\caption{ }
\end{subfigure}
\quad
\begin{subfigure}[b]{0.4\textwidth}
\centering
\includegraphics[width=\textwidth]{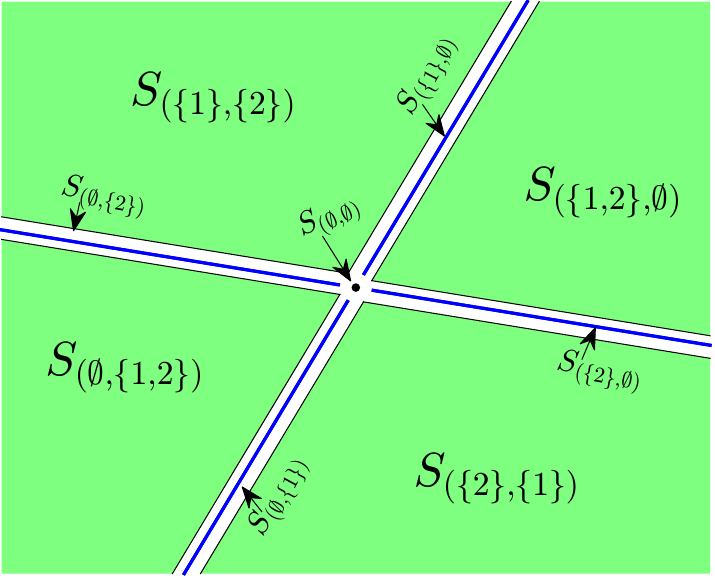}
\caption{ }
\end{subfigure}
\hfill
\caption{\emph{Partition Description.}
\textbf{(a)} Two hyperplanes $P_1$ and $P_2$ in $\IR^2$, with normals $a_1$ and $a_2$ respectively, intersect at a point $x_0$. The corresponding dual vectors $a^*_1$ and $a^*_2$ generated by $P_1$ and $P_2$ are shown. \textbf{(b)} An illustration of the partition $\mcS$ of $\IR^2$ generated by $a^*_1$, $a^*_2$ and $x_0$. There are four 2-dimensional sets (the green regions), four 1-dimensional sets (the blue rays), and one 0-dimensional set (the black point $x_0$). We have added white space between the sets to stress that they are pair-wise disjoint.}
\label{fig:dualbasispartition}
\end{figure}

\paragraph{Codomain Partition.}
We will now introduce another partition of $\IR^d$ that will be useful in describing the geometrical structure of the codomain of the ReLU layer.
Since $x_0$ and $\{a_i^*: i\in I\}$ are derived from the parameters $A$ and $b$ in \eqref{eq:map}, it is easy to verify that in the canonical case, when $A=I_d$ (the $d\times d$ identity matrix) and $b=0$, we have $x_0=0$ and $a^*_i=e_i$. We will use the hat symbol to denote sets defined by \eqref{eq:S} in this specific setting, that is
\begin{align}
\widehat{S}_{(I_+,I_-)} = \bigg\{x\in \IR^d:x=\sum_{i \in I_+} \beta_i e_i - \sum_{i \in I_-} \beta_i e_i, \text{ with } \beta_i>0\bigg\}
\label{eq:Shat}
\end{align}
For instance, we have $\widehat{S}_{(I,\emptyset)}=\IR^d_{++}$ --- the strictly positive orthant in $\IR^d$, and $\widehat{S}_{(\emptyset,\emptyset)}=\{0\}$. In accordance with the domain partition above, the family of sets on the form \eqref{eq:Shat} also generates a partition of $\IR^d$, which we denote by $\widehat{\mcS}$. Examples of partitions $\widehat{\mcS}$ and $\mcS$ of $\IR^3$ are depicted in Figure~\ref{fig:S}.

\paragraph{Affine Equivalence.}
The affine map $A_b:\IR^d\rightarrow \IR^d$ induces a one-to-one correspondence between the sets in $\mcS$ and those in $\widehat{\mcS}$ in the sense that for all $\bfI \in \mcI$, the restriction $A_b:S_{\bfI}\rightarrow \widehat{S}_{\bfI}$ is a bijection. Indeed, for any $x\in S_{\bfI}=S_{(I_+,I_-)}$ we have that $x=x_0+\sum_{i\in I_+} \alpha_i a_i^* - \sum_{i\in I_-} \alpha_i a_i^*$ for some $\alpha_i>0$ by \eqref{eq:S}. From equation \eqref{eq:affine} it follows
\begin{align}
Ax+b=\sum_{i\in I_+} \alpha_i e_i- \sum_{i\in I_-} \alpha_i e_i \in \widehat{S}_{(I_+,I_-)}
\label{eq:affinemap}
\end{align}
Further, the inverse $A^{-1}_b:\widehat{S}_{\bfI}\rightarrow S_{\bfI}$ is given by $A_b^{-1}(y)=A^{-1}(y-b)$ and for every $y\in \widehat{S}_{(I_+,I_-)}$ with $y=\sum_{i\in I_+} \beta_i e_i - \sum_{i\in I_-} \beta_i e_i$ for some $\beta_i>0$ it holds 
\begin{align}
A^{-1}(y-b)=x_0+\sum_{i\in I_+} \beta_i a^*_i- \sum_{i\in I_-} \beta_i a_i^*\in S_{(I_+,I_-)}
\label{eq:inverseaffinemap}
\end{align}
because of the definition of $x_0$ and recalling that $a_i^*$ is the $i$:th column vector in $A^{-1}$. Thus, the affine map takes every set $S_{\bfI}$ in $\mcS$ to the corresponding set $\widehat{S}_{\bfI}$ in $\widehat{\mcS}$ and vice versa for the inverse. In that sense, the pairings $(I_+,I_-)$ are invariant under the affine map due to our construction of the partitions $\mcS$ and $\widehat{\mcS}$.

\paragraph{Closure and Boundary.}
The closure of a set $S_{(I_+,I_-)}$ is given by
\begin{align}
\overline{S}_{(I_+,I_-)}=\bigg\{x\in \IR^d: x=x_0+\sum_{i\in I_+}\alpha_i a_i^*-\sum_{i\in I_-}\alpha_i a_i^*, \text{ with } \alpha_i\geq 0\bigg\}
\end{align}
i.e., the conical hull of the set $\{a_i^*:i\in I_+\}\cup \{-a_i^*:i\in I_-\}$ translated by $x_0$. In particular, $\overline{S}_{(I,\emptyset)}$ is a polyhedral cone with apex $x_0$ with supporting hyperplanes $P_i$, $i\in I$, as illustrated in Figure~\ref{fig:cones}. Therefore, in the canonical case we get $\overline{\widehat{S}}_{(I,\emptyset)}=\IR^d_+$. We can also express the closure in terms of other sets in $\mcS$. To see this, we introduce a partial order $\preceq$ on $\mcI$. For $\bfI,\bfJ \in \mcI$ with $\bfI=(I_+,I_-)$, $\bfJ=(J_+,J_-)$ we define $\bfJ \preceq \bfI$ if and only if $J_+ \subseteq I_+$ and $J_- \subseteq I_-$. Further, we write $\bfJ \prec \bfI$ if $\bfJ\preceq \bfI$ and $\bfJ\neq \bfI$. This gives the following compact expressions for the closure respectively the boundary of a set $S_{\bfI}$
\begin{align}
\overline{S}_{\bfI}=\bigcup_{\bfJ \preceq \bfI}S_{\bfJ}
\qquad\text{and}\qquad
\partial S_{\bfI}=\bigcup_{\bfJ \prec \bfI}S_{\bfJ}
\label{eq:closure}
\end{align}

\begin{figure} 
\centering
\begin{subfigure}[b]{0.35\textwidth}
\centering
\includegraphics[width=\textwidth]{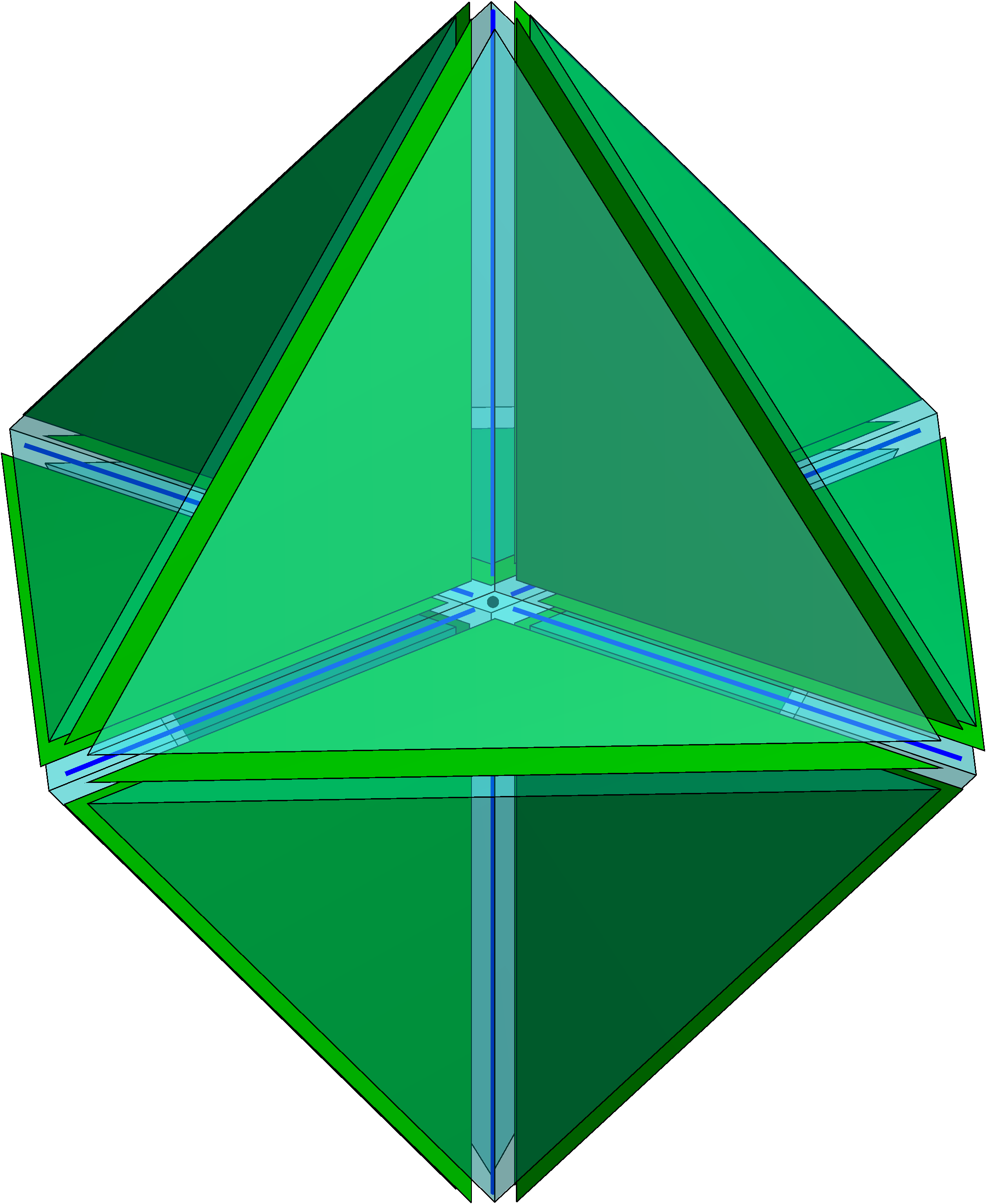}
\caption{ }
\end{subfigure}
\quad
\begin{subfigure}[b]{0.45\textwidth}
\centering
\includegraphics[width=\textwidth]{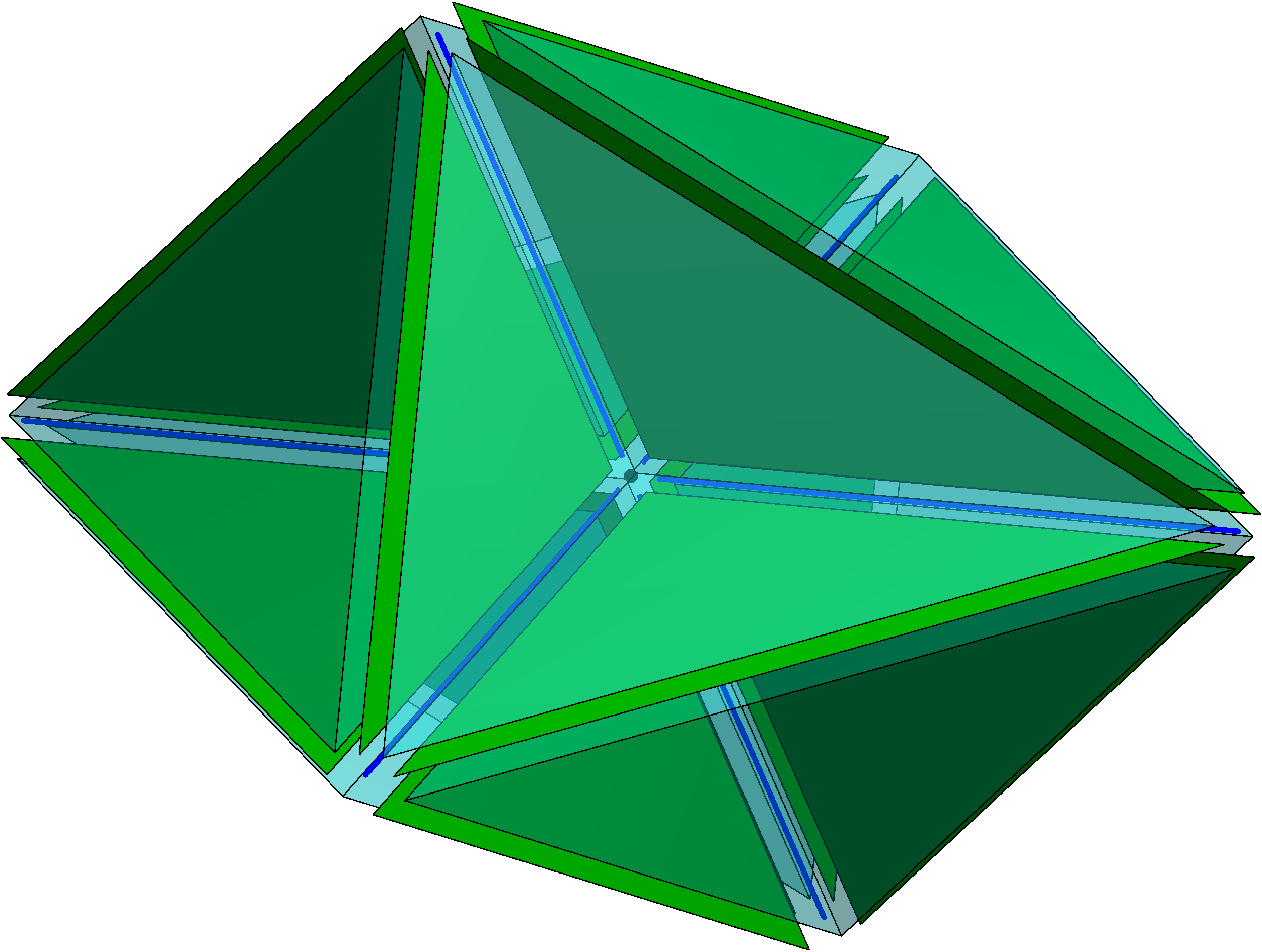}
\caption{ }
\end{subfigure}
\hfill
\caption{\emph{Domain and Codomain Partitions.}
In case $d=3$ the families $\widehat{\mcS}$ and $\mcS$ will both partition $\IR^3$ in eight 3-dimensional sets (the transparent volumes), twelve 2-dimensional sets (the green faces), six 1-dimensional sets (the blue rays) and one 0-dimensional set (the black point). In fact, the sets extend outwards from $x_0$ (the black point in the center) infinitely, but for illustrative purposes only slices of them are shown. We have also intentionally added space between the sets to stress that they are pairwise disjoint and make distinguishing them easier. \textbf{(a)} Illustration of the canonical partition $\widehat{\mcS}$. \textbf{(b)} Illustration of $\mcS$ where $a_1^*=\frac{e_1}{4}-e_2+\frac{e_3}{10}$, $a_2^*=e_1+\frac{e_3}{4}$, $a_3^*=-\frac{e_2}{2}+e_3$ and $x_0=[1,1,1]^T$.  Every $S_{\bfI}\in \mcS$ is the preimage of the corresponding set $\widehat{S}_{\bfI}\in \widehat{\mcS}$ under the affine map $x\mapsto Ax+b$ and vice versa.}
\label{fig:S}
\end{figure}

\begin{figure} 
\centering
\begin{subfigure}[b]{0.49\textwidth}
\centering
\includegraphics[width=\textwidth]{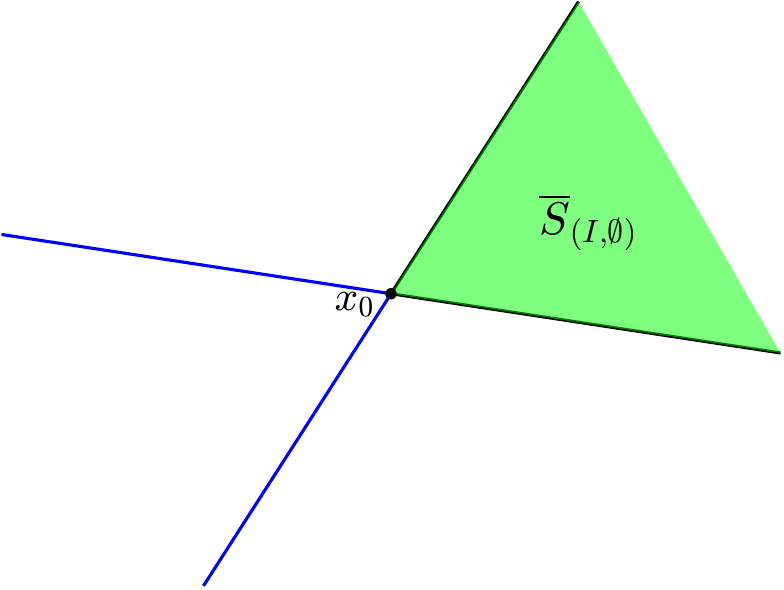}
\caption{ }
\end{subfigure}
\quad
\begin{subfigure}[b]{0.45\textwidth}
\centering
\includegraphics[width=\textwidth]{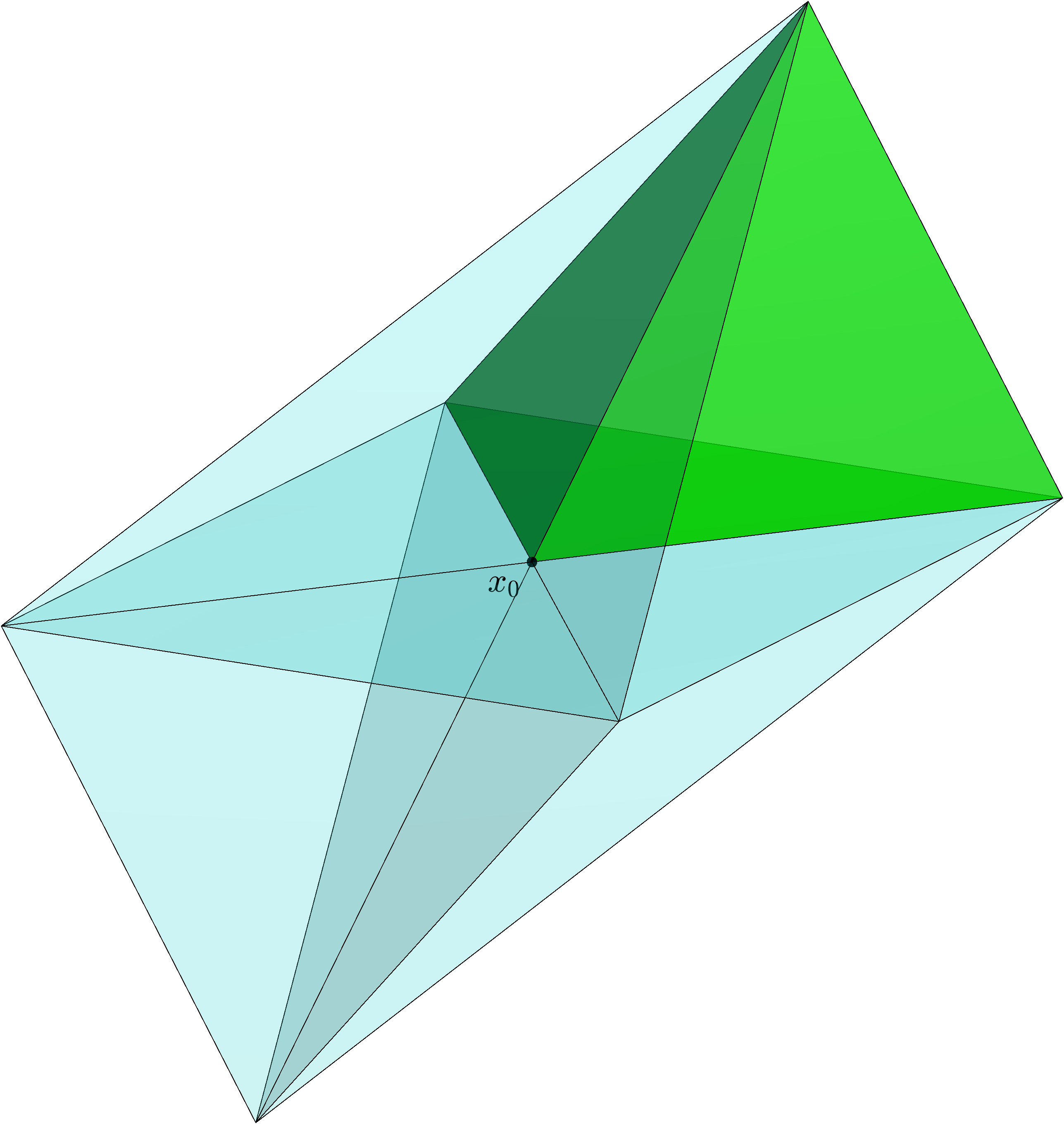}
\caption{ }
\end{subfigure}
\hfill
\caption{\emph{Polyhedral Cone.}
Illustrations of the polyhedral cone $\overline{S}_{(I,\emptyset)}$ (in green), with apex at $x_0$, for \textbf{(a)} the partition in Figure~\ref{fig:dualbasispartition} and \textbf{(b)} the partition in Figure~\ref{fig:S} (viewed from the apex of the cone). Only slices of these cones are shown since both of them extend indefinitely.}
\label{fig:cones}
\end{figure}

\subsection{Image of a ReLU Layer}
When considering the entire map $T$, the situation is slightly more complicated due to the application of the ReLU function. In general, several sets in $\mcS$ will be mapped to the same set in $\widehat{\mcS}$. As we will see the boundary $\partial \widehat{S}_{(I,\emptyset)}$ of $\widehat{S}_{(I,\emptyset)} = \IR^d_+$ given by
\begin{align}
\partial \widehat{S}_{(I,\emptyset)} =\bigcup_{\bfJ \prec  (I,\emptyset)}\widehat{S}_{\bfJ}= \bigcup_{J \subset I} \widehat{S}_{(J,\emptyset)}
\end{align}
will be an important object when studying the structure of $T$ and therefore we also introduce the partition $\partial \widehat{\mcS}=\{\widehat{S}_{\bfJ}:\bfJ\prec (I,\emptyset)\}$ of this boundary. Along the same lines, we define $\partial \mcS$. Lemma~\ref{lemma:map} below describes the action of $T$ on sets in $\mcS$. 
\begin{lem}[Image Structure of a ReLU Layer]
\label{lemma:map}
Given a set $S_{(I_+,I_-)}\in \mcS$ it holds
\begin{align}
\boxed{
T(S_{(I_+,I_-)})=\widehat{S}_{(I_+,\emptyset)}
}
\end{align}
\end{lem}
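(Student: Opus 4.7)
The plan is to prove the equality by double inclusion, exploiting the affine equivalence between $\mcS$ and $\widehat{\mcS}$ established in \eqref{eq:affinemap} and then tracking how the component-wise ReLU acts on the canonical representation.

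For the inclusion $T(S_{(I_+,I_-)}) \subseteq \widehat{S}_{(I_+,\emptyset)}$, I would take an arbitrary $x \in S_{(I_+,I_-)}$ and use its dual-basis expansion from \eqref{eq:S}, so that applying $A_b$ yields, by \eqref{eq:affinemap},
\begin{align*}
A_b(x) = \sum_{i\in I_+}\alpha_i e_i - \sum_{i\in I_-}\alpha_i e_i,
\end{align*}
with $\alpha_i>0$. Then I would apply the component-wise ReLU: entries indexed by $I_+$ are strictly positive and preserved, entries indexed by $I_-$ are strictly negative and get zeroed, and the remaining entries in $I\setminus(I_+\cup I_-)$ are already $0$. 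This leaves $T(x) = \sum_{i\in I_+}\alpha_i e_i$, which is exactly the defining form of an element of $\widehat{S}_{(I_+,\emptyset)}$ in \eqref{eq:Shat}.

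For the reverse inclusion $\widehat{S}_{(I_+,\emptyset)} \subseteq T(S_{(I_+,I_-)})$, I would take an arbitrary $y=\sum_{i\in I_+}\beta_i e_i \in \widehat{S}_{(I_+,\emptyset)}$ with $\beta_i>0$ and explicitly construct a preimage. A natural choice is
\begin{align*}
x = x_0 + \sum_{i\in I_+}\beta_i a_i^* - \sum_{i\in I_-}\gamma_i a_i^*,
\end{align*}
for any choice of strictly positive numbers $\gamma_i$ (e.g.\ $\gamma_i = 1$); by construction $x\in S_{(I_+,I_-)}$, and running it through the forward computation above recovers $T(x)=y$. If $I_- = \emptyset$ the second sum is simply empty, so the construction still works.

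The argument is essentially a bookkeeping exercise once the affine equivalence \eqref{eq:affinemap}--\eqref{eq:inverseaffinemap} is in hand, so I do not anticipate a real obstacle. The one subtlety worth being careful about is the role of the indices in $I\setminus(I_+\cup I_-)$: these contribute a zero coefficient in the dual expansion, hence a zero component after $A_b$, which ReLU leaves at zero, so they correctly match the absence of those indices in the target set $\widehat{S}_{(I_+,\emptyset)}$. Degenerate cases such as $I_+ = \emptyset$ (giving $T(x)=0\in \widehat{S}_{(\emptyset,\emptyset)}$) should be checked to confirm the formula holds uniformly.
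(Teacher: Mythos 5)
Your proposal is correct and follows essentially the same route as the paper's proof: expand $x$ in the dual basis, apply the affine equivalence \eqref{eq:affinemap} and the component-wise ReLU for the forward inclusion, and construct an explicit preimage $x_0+\sum_{i\in I_+}\beta_i a_i^*-\sum_{i\in I_-}\gamma_i a_i^*$ (which is exactly the paper's $A^{-1}(y-b)-\sum_{i\in I_-}\alpha_i a_i^*$) for the reverse inclusion. The extra attention you give to the indices in $I\setminus(I_+\cup I_-)$ and to degenerate cases is a harmless refinement of the same argument.
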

\begin{proof}
Consider a point $x\in S_{(I_+,I_-)}$, which by \eqref{eq:S} has the expansion
\begin{align}
x=x_0+\sum_{i\in I_+} \alpha_i a_i^* - \sum_{i\in I_-} \alpha_i a_i^*
\end{align}
with coefficients $\alpha_i>0$. Applying the ReLU layer \eqref{eq:map} and using equation \eqref{eq:affinemap} we get
\begin{align}
\label{eq:TonSector}
T(x)&=\mathrm{ReLU}(Ax+b,0)
\\&
=\max\bigg(\sum_{i\in I_+} \alpha_i e_i- \sum_{i\in I_-} \alpha_i e_i,0\bigg)
\\&
=\sum_{i\in I_+}\alpha_ie_i \in \widehat{S}_{(I_+,\emptyset)}
\label{eq:TonSector-last}
\end{align}
where the last equality holds by the definition of the Euclidian basis vectors $e_i$ and that $\alpha_i > 0$.
Conversely, for any $y=\sum_{i\in I_+} \beta_i e_i \in  \widehat{S}_{(I_+,\emptyset)}$ where $\beta_i>0$ we have
\begin{align}
A^{-1}(y-b)-\sum_{i\in I_-}\alpha_ia^*_i \in S_{(I_+,I_-)}
\end{align}
for any choice of $\alpha_i>0$ according to equation \eqref{eq:inverseaffinemap}. Also,
\begin{align}
T\bigg(A^{-1}(y-b)-\sum_{i\in I_-}\alpha_ia^*_i\bigg)=y
\label{eq:Ty}
\end{align}
and hence for all $y\in \widehat{S}_{(I_+,\emptyset)}$ there is an $x\in S_{(I_+,I_-)}$ such that $T(x)=y$.
\end{proof}
Lemma~\ref{lemma:map} reveals that whenever $I_+ \subset I$ we have 
\begin{align}
T(S_{(I_+,I_-)})=\widehat{S}_{(I_+,\emptyset)}\in \partial\widehat{\mcS}
\end{align}
and in particular when $I_+=I$ we get
\begin{align}
T(S_{(I,\emptyset)})=\widehat{S}_{(I,\emptyset)}
\end{align}
Hence, $T$ reduces to the affine map $A_b$ when restricted to $S_{(I,\emptyset)}$ and therefore we will refer to $S_{(I,\emptyset)}$ as the affine sector in $\mcS$. In fact, $T$ acts affinely on all points in the polyhedral cone $\overline{S}_{(I,\emptyset)}$. Moreover, by the lemma we also see that 
\begin{align}
\dim(T(S_{(I_+,I_-)}))=\dim(\widehat{S}_{(I_+,\emptyset)})=|I_+|\leq |I_+\cup I_-|=\dim(S_{(I_+,I_-)})
\end{align}
Hence, the dimension of the images of $S_{(I_+,I_-)}$ with $I_-\neq \emptyset$ under the map $T$ is reduced. The only sets in $\mcS$ with preserved dimension are those that are subsets of $\overline{S}_{(I,\emptyset)}$. Especially, the only set with a non-zero measure in $\IR^d$ for which the dimension is preserved is the affine sector $S_{(I,\emptyset)}$ on which $T$ acts affinely. All other sets in $\mcS$ with non-zero measure will be mapped onto some lower dimensional set in $\partial \widehat{\mcS}$. Since points in a dataset will generally belong to a subset of the $d$-dimensional sets (those with non-zero measure) in $\mcS$ and all of them, but one, will be mapped to some lower dimensional set on the boundary $\partial \widehat{S}_{(I,\emptyset)}$ it is clear that $T$ has contracting properties. Thus, iteratively applying maps of the form \eqref{eq:map} as is done in deep fully-connected ReLU networks will efficiently contract the input data.

\paragraph{Geometric Interpretation.}
The ReLU layer \eqref{eq:map} is constructed as the affine map $A_b$ followed by the ReLU activation function, which is a projection $\IR^d \mapsto \overline{\widehat{S}}_{I,\emptyset} = \IR^d_+$.
Using the geometric structure defined above, we will now give an alternative construction where the order of these operations is reversed, as a projection onto a polyhedral cone followed by the affine map $A_b$. We know that a point $x\in S_{(I_+,I_-)}$ with expansion $x=x_0+\sum_{i\in I_+} \alpha_i a_i^* - \sum_{i\in I_-} \alpha_i a_i^*$, $\alpha_i>0$, is mapped to the point $y=\sum_{i\in I_+} \alpha_i e_i\in \widehat{S}_{(I_+,\emptyset)}\subset \IR^d_+$. We can split this transformation into two steps
\begin{align}
\underbrace{x_0+\sum_{i\in I_+} \alpha_i a_i^* - \sum_{i\in I_-} \alpha_i a_i^*}_{\in S_{(I_+,I_-)}} \stackrel{\pi} \longmapsto \underbrace{x_0+\sum_{i\in I_+} \alpha_i a_i^*}_{\in S_{(I_+,\emptyset)}\subset \overline{S}_{(I,\emptyset)}} \stackrel{A_b} \longmapsto \underbrace{\sum_{i\in I_+} \alpha_i e_i}_{\in\widehat{S}_{(I_+,\emptyset)}\subset\IR^d_+} 
\label{eq:decomposition}
\end{align}
This suggests that we can decompose the ReLU layer $T:\IR^d\rightarrow \IR^d_+$ as
\begin{align} \label{eq:commutating}
\boxed{
T =\mathrm{ReLU}\circ A_b
= A_b\circ \pi
}
\end{align}
where $\pi:\IR^d\rightarrow \overline{S}_{(I,\emptyset)}$ is a surjective projection mapping the input space $\IR^d$ onto the polyhedral cone $\overline{S}_{(I,\emptyset)}$ and $A_b:\overline{S}_{(I,\emptyset)}\rightarrow \IR^d_+$ is the bijective affine map given by \eqref{eq:A_b} mapping the polyhedral cone onto the non-negative orthant $\IR^d_+$.
Note that \eqref{eq:commutating} gives a description of how $A_b$ commutes with the ReLU.
The projection $\pi$ is piecewise defined on the sectors in $\mcS$. For a point $x\in S_{(I_+,I_-)}$ 
we define
\begin{align}
\pi(x)=\pi\bigg(x_0+\sum_{i\in I_+} \alpha_i a_i^* - \sum_{i\in I_-} \alpha_i a_i^*\bigg)=x_0+\sum_{i\in I_+} \alpha_i a_i^*
\label{eq:pi_def}
\end{align}
which clearly is a projection since $\pi\circ \pi(x) = \pi(x)$. For points $x\in \overline{S}_{(I,\emptyset)}$ the projection $\pi$ acts trivially, i.e., $\pi(x)=x$, whereas points outside the cone will be mapped to some part of the cone boundary. Apart from a translation, the non-linear properties of $T$ are entirely captured by the projection $\pi$ since $A_b$ is affine. The geometrical structure of the decomposition of $T$ is depicted in Figure~\ref{fig:composition}.
\begin{figure}
\centering
\includegraphics[width=\textwidth]{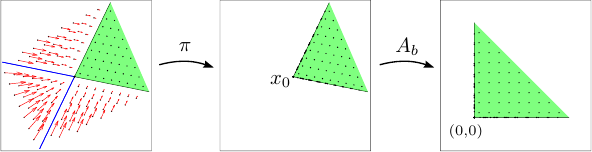}
\caption{\emph{Geometrical Description of a ReLU layer.}
An illustration in $\IR^2$ of the geometrical structure of $T$ written as a composition of a projection $\pi$ mapping $\IR^d$ onto the cone $\overline{S}_{(I,\emptyset)}$ followed by $A_b$ mapping $\overline{S}_{(I,\emptyset)}$ onto $\IR^d_+$ affinely.}
\label{fig:composition}
\end{figure}
From the construction of $\pi$ it is clear that a sector $S_{(I_+,I_-)}$ with $I_-\neq \emptyset$ will be projected onto $S_{(I_+,\emptyset)} \in \partial \mcS$. Hence, the $|I_+\cup I_-|$-dimensional set $S_{(I_+,I_-)}$ is projected onto a $|I_+|$-dimensional part of the boundary of the polyhedral cone. Moreover, from the definition \eqref{eq:pi_def} of $\pi$ we see that the projection is parallel to the subspace span$\{a_i^*:i\in I_-\}$. The action of $\pi$ in $\IR^3$ is illustrated in Figure~\ref{fig:projection3D}.

\begin{figure}
\centering
\includegraphics[width=0.5\linewidth]{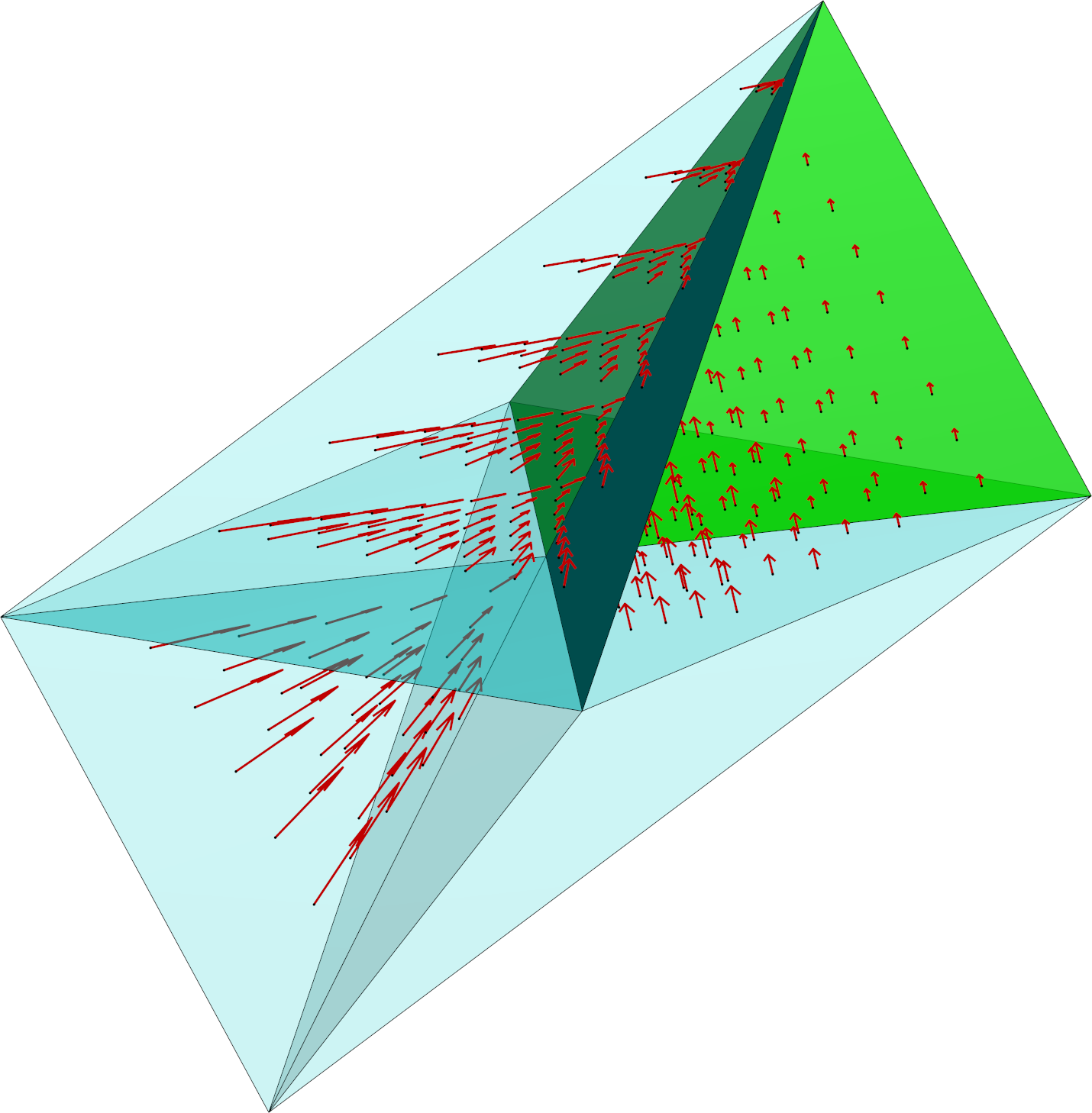}
\caption{\emph{Polyhedral Cone Projection.}
The projection $\pi$ will map $\IR^d$ onto the polyhedral cone $\overline{S}_{(I,\emptyset)}$. A sector $S_{(I_+,I_-)}$ in $\mcS$ will be projected onto a $|I_+|$-dimensional part of the boundary of the cone. The figure illustrates how three different 3-dimensional sectors are projected onto the boundary of a cone in $\IR^3$. Points in the upper right sector are projected onto a 2-dimensional face of the cone along one of the dual vectors. Points in the upper left sector will be projected onto the 1-dimensional edge of the boundary. That projection is parallel with a plane spanned by two of the dual vectors. The bottom left sector, opposite to the interior of the cone, will be mapped to the 0-dimensional apex $x_0$.}
\label{fig:projection3D}
\end{figure}

%\paragraph{Contracting ReLU layers.}
\begin{rem}[Contracting ReLU Layers] \label{rem:contracting}
By a minor modification, this geometrical description also extends to ReLU layers where the input dimension is reduced, i.e., $T:\IR^d\rightarrow \IR^m_+$ where $d>m$. In this scenario, $A\in \IR^{m\times d}$ will have fewer rows than columns and we only get $m$ hyperplanes in $\IR^d$, defined as in \eqref{eq:hyppp}. Assuming the rows of $A$ are linearly independent, the hyperplanes will not intersect in a point but in a $(d-m)$-dimensional affine subspace $\bigcap_{i=1}^m P_i$ of $\IR^d$. However, if we define the $m$-dimensional subspace $V=\text{span}(\{a_1,a_2,\hdots,a_m\})$ we get that $V \cap \bigl(\bigcap_{i=1}^m P_i \bigr)$ is a single point $x_0 \in V$. Similarly, if we let $I=\{1,2,\hdots,m\}$ we can, for each $i\in I$, define the dual vector $a_i^*$ parallel to the line 
\begin{align}
L_i=V \cap \Bigl(\bigcap_{j\in I\setminus \{i\}} P_j \Bigr)
\end{align}
and scaled such that $a_i\cdot a_j^*=\delta_{ij}$ for $i,j\in I$. These $m$ dual vectors will also be linearly independent and therefore they will be a basis of the subspace $V$. Using these dual vectors, we proceed as before by defining a partition of the subspace $V$ using the sets
\begin{align}
S_{(I_+,I_-)}= \bigg\{ x\in \IR^d \,:\, x=x_0+\sum_{i \in I_+} \alpha_i a_i^* - \sum_{i \in I_-} \alpha_i a_i^*, \ \alpha_i>0\bigg\}
\end{align}
for disjoint index sets $I_+, I_- \subseteq I=\{1,2,\hdots,m\}$. In this case we get that $\overline{S}_{(I,\emptyset)}\subset V$ is an $m$-dimensional cone with apex at $x_0$ embedded in $\IR^d$.
Let $V^\perp$ be the orthogonal complement to $V$ in $\IR^d$, i.e.,
\begin{align} \label{eq:orthogonalV}
V^\perp = \{ x \in \IR^d \,:\, x \cdot v = 0, \ \forall v \in V \}
\end{align}
and let $\{w_i\}_{i\in I_\perp}$, $I_\perp=\{m+1,\dots,d\}$, be some basis to $V^\perp$.
Since $\IR^d = V \oplus V^\perp$ and $x_0 \in V$, each point $x\in \IR^d$ has the expansion
\begin{align}
x=
\underbrace{x_0+\sum_{i\in I} \lambda_i a^*_i}_{\in V}
+
\underbrace{\sum_{i \in I_\perp}\lambda_i w_i}_{\in V^\perp} ,
\qquad\text{where $\lambda_i\in \IR$}
\end{align}
Because the rows of $A$ are vectors in $V$ whereas $w_i \in V^\perp$ we by the definition of the orthogonal complement \eqref{eq:orthogonalV} have $Aw_i=0$,
and it follows that $T$ is invariant to components in $V^\perp$ such that 
\begin{align}
T(x) &=
T\bigg(x_0+\sum_{i\in I}\lambda_ia^*_i + \sum_{i\in I_\perp}\lambda_iw_i\bigg)
=
T\bigg(x_0+\sum_{i\in I} \lambda_ia^*_i\bigg)
\end{align}
We can incorporate this into our geometric description by prepending an orthogonal projection onto the subspace $V \subset \IR^d$.
This gives a decomposition of $T:\IR^d\rightarrow \IR^m_+$ as
\begin{align}
\begin{split}
&x=\underbrace{x_0+\sum_{i\in I_+} \alpha_i a_i^* - \sum_{i\in I_-} \alpha_i a_i^*+  \sum_{i\in I_\perp} \lambda_iw_i}_{\in \IR^d}\\
& \stackrel{P_V}\longmapsto \underbrace{x_0+\sum_{i\in I_+} \alpha_i a_i^* - \sum_{i\in I_-} \alpha_i a_i^*}_{\in  S_{(I_+,I_-)}\subset V} \stackrel{\pi} \longmapsto \underbrace{x_0+\sum_{i\in I_+} \alpha_i a_i^*}_{\in S_{(I_+,\emptyset)}\subset \overline{S}_{(I,\emptyset)}} \stackrel{A_b} \longmapsto \underbrace{\sum_{i\in I_+} \alpha_i e_i}_{\in \overline{\widehat{S}}_{(I,\emptyset)} = \IR^m_+}
\end{split}
\end{align}
where $P_V:\IR^d\rightarrow V$ is the orthogonal projection onto $V$, 
whereafter the same geometrical description \eqref{eq:decomposition} as in the case of preserved dimension is used with the difference that the cone projection takes place in the $m$-dimensional subspace $V\subset \IR^d$.
\end{rem}

\subsection{Preimage of a ReLU Layer}
The preimage of a set $\widehat{\omega}\subseteq \IR^d$ under the ReLU layer $T$ is the set of all elements in the domain $\IR^d$ that $T$ maps into $\widehat{\omega}$, and we denote the preimage by $T^{-1}(\widehat{\omega})$. 
Based on Lemma~\ref{lemma:map} and its proof, we will here express preimages under $T$ using the geometrical structure of the domain and codomain detailed above.
Firstly, since $T$ maps $\IR^d$ onto $\IR^d_+=\overline{\widehat{S}}_{(I,\emptyset)}$, the preimage of any point $y \notin \overline{\widehat{S}}_{(I,\emptyset)}$ will be empty. 
Secondly, equation \eqref{eq:Ty} in the proof of Lemma~\ref{lemma:map} shows that the set 
\begin{align}
\bigg\{x\in \IR^d \,:\, x=A^{-1}(y-b)-\sum_{i\in I_-}\alpha_ia^*_i, \,\, \alpha_i>0\bigg\}
\label{eq:preimsector}
\end{align}
contains all points in $S_{(I_+,I_-)}$ that are mapped to a specific point $y\in \widehat{S}_{(I_+,\emptyset)} \subset \overline{\widehat{S}}_{(I,\emptyset)}$. In other words, the set in \eqref{eq:preimsector} contains the preimage of $y$ intersected with $S_{(I_+,I_-)}$.
Thirdly, Lemma~\ref{lemma:map} reveals that the ReLU layer is invariant with respect to $I_-$, i.e.,
\begin{align}
T(S_{(I_+,I_-)})=T(S_{(I_+,J_-)}) \qquad \forall J_- \subseteq I \setminus I_+
\label{eq:equalimage}
\end{align}
Hence, the complete preimage of a single point $y\in \widehat{S}_{(I_+,\emptyset)}$ is given by
\begin{align}
T^{-1}(y)&=\bigcup_{K\subseteq I\setminus I_+} \bigg\{x\in \IR^d \,:\, x=A^{-1}(y-b)-\sum_{i\in K}\alpha_ia^*_i, \,\, \alpha_i>0\bigg\}
\\&=
\bigg\{x\in \IR^d \,:\, x= A^{-1}(y-b)-\sum_{i\in I\setminus I_+}\alpha_ia^*_i, \,\, \alpha_i\geq 0\bigg\}
\label{eq:preimy}
\end{align}
If $y\in \widehat{S}_{(I_+,\emptyset)}$, its complete preimage is spanned by $\{a_i^*:i\in I\setminus I_+\}$, so
\begin{align}
\dim(T^{-1}(y))=d-\dim(\widehat{S}_{(I_+,\emptyset)})=\text{codim}(\widehat{S}_{(I_+,\emptyset)})
\end{align}
Hence, points intersecting a lower dimensional facet of $\partial \widehat{\mcS}$ will generate a preimage of higher dimension than points intersecting a higher dimensional facet. Figure~\ref{fig:preimy} illustrates the structure of the preimages for different points in $\IR^3_+$. The preimage of an entire set $\widehat{S}_{\bfI} \in \widehat{\mcS}$ is expressed in the following lemma.
\begin{figure}
\centering
\begin{subfigure}[b]{0.33\textwidth}
\centering
\includegraphics[width=\textwidth]{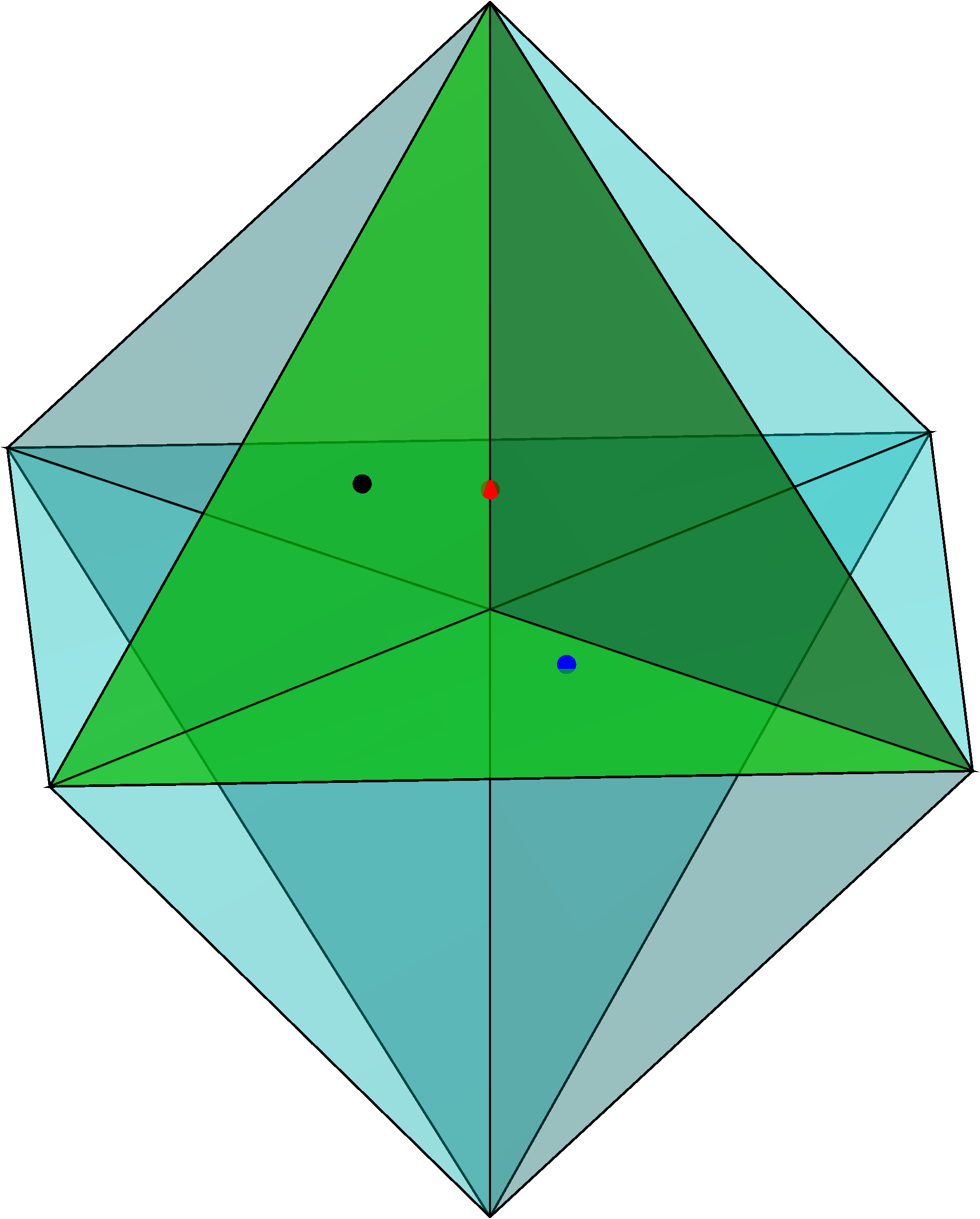}
\caption{ }
\end{subfigure}
\quad
\begin{subfigure}[b]{0.45\textwidth}
\centering
\includegraphics[width=\textwidth]{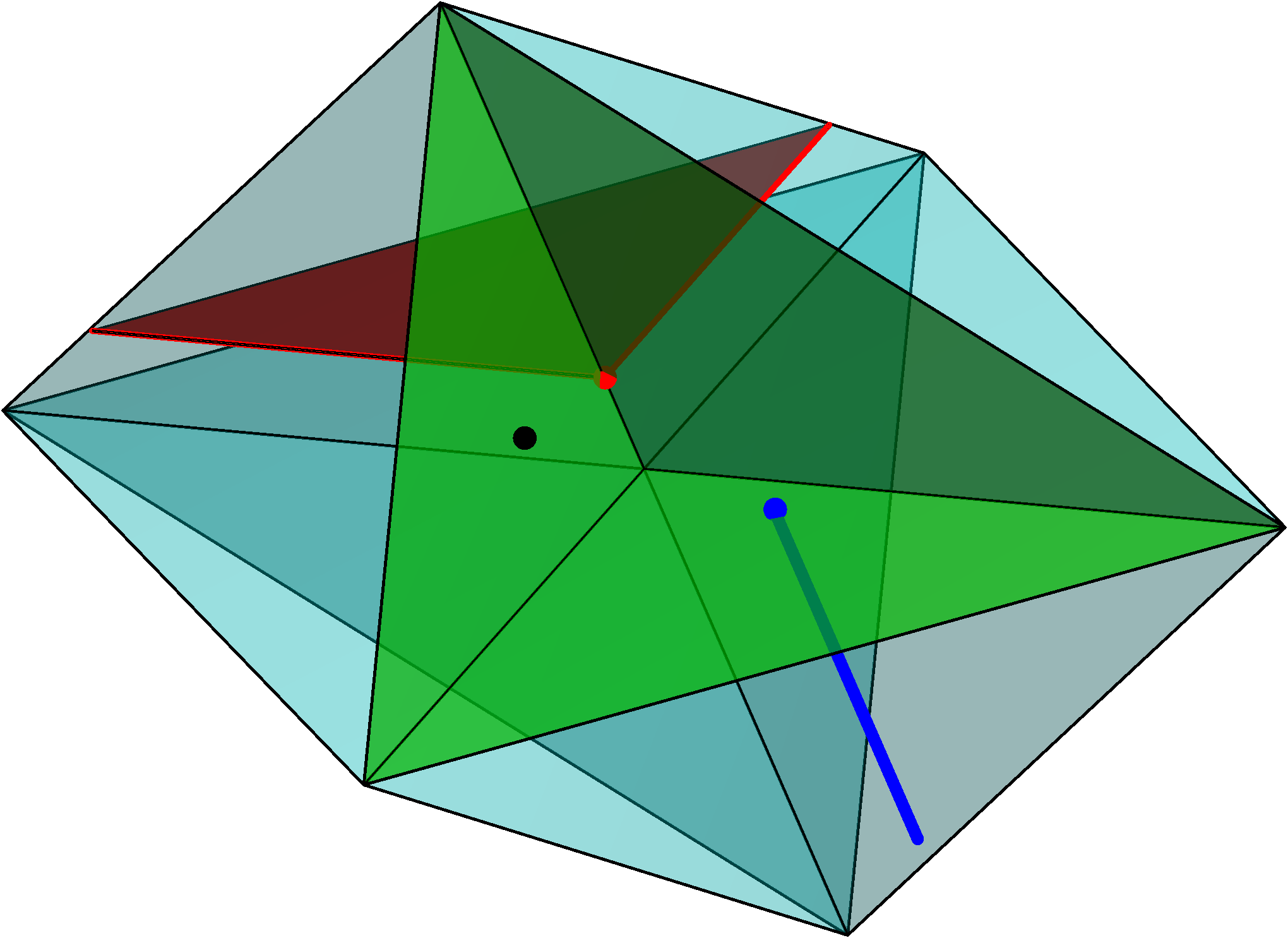}
\caption{ }
\end{subfigure}
\hfill
\caption{\emph{Preimages of Points.}
Examples of preimages of three different points $y_1$, $y_2$ and $y_3$ in $\IR^3_+$. The same partitions $\widehat{\mcS}$ and $\mcS$ as in Figure~\ref{fig:S} are illustrated with all but the two-dimensional sets removed. The facets of the boundary of $\IR^d_+$ and of the polyhedral cone $\overline{S}_{(I,\emptyset)}$ are colored green. \textbf{(a)} The partition $\widehat{\mcS}$ with a point $y_1\in \widehat{S}_{(\{1,2,3\}, \emptyset)}$ (in black), a point $y_2 \in \widehat{S}_{(\{1,2\}, \emptyset)}$ (in blue) and a point $y_3 \in \widehat{S}_{(\{3\}, \emptyset)}$ (in red) are shown. \textbf{(b)} The partition $\mcS$ with the points $x_i=A_b^{-1}(y_i)$, $i\in \{1,2,3\}$ marked, $x_1 \in S_{(\{1,2,3\}, \emptyset)}$ (in black), $x_2 \in S_{(\{1,2\}, \emptyset)}$ (in blue) and $x_3 \in \widehat{S}_{(\{3\}, \emptyset)}$ (in red). The preimage for each $y_i$ is shown with the corresponding color and whose dimension is the codimension of the set in $\widehat{\mcS}$ to which $y_i$ belongs.}
\label{fig:preimy}
\end{figure}
\begin{lem}[Preimage Structure under a ReLU Layer]
\label{lemma:preim}
The preimage of the set $\widehat{S}_{(J,\emptyset)}\in \widehat{\mcS}$ under $T$, where $J\subseteq I$, is given by
\begin{align}
\boxed{
T^{-1}(\widehat{S}_{(J,\emptyset)})=\bigcup_{K \subseteq I\setminus J} S_{(J,K)}
}
\end{align}
\end{lem}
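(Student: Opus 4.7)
The plan is to leverage Lemma~\ref{lemma:map} together with the fact that $\mcS$ partitions $\IR^d$ and $\widehat{\mcS}$ partitions $\IR^d$ into pairwise disjoint sets. Since every point in $\IR^d$ lies in exactly one sector $S_{(I_+,I_-)}\in \mcS$, I would decompose the preimage as
\begin{align}
T^{-1}(\widehat{S}_{(J,\emptyset)})
= \bigcup_{(I_+,I_-)\in \mcI} \Bigl( S_{(I_+,I_-)} \cap T^{-1}(\widehat{S}_{(J,\emptyset)})\Bigr)
\end{align}
so the task reduces to identifying which sectors $S_{(I_+,I_-)}$ contribute, and showing that those sectors are contained in the preimage in full.

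By Lemma~\ref{lemma:map}, for any $(I_+,I_-)\in \mcI$ we have $T(S_{(I_+,I_-)})=\widehat{S}_{(I_+,\emptyset)}$. Because the sets in $\widehat{\mcS}$ are pairwise disjoint, $\widehat{S}_{(I_+,\emptyset)}\cap \widehat{S}_{(J,\emptyset)}$ is nonempty precisely when $I_+=J$, in which case the two sets coincide and $T$ maps all of $S_{(J,I_-)}$ into $\widehat{S}_{(J,\emptyset)}$. When $I_+\neq J$, the image $\widehat{S}_{(I_+,\emptyset)}$ is disjoint from $\widehat{S}_{(J,\emptyset)}$, so $S_{(I_+,I_-)}$ contributes nothing to the preimage.

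Combining these observations, only sectors of the form $S_{(J,I_-)}$ contribute, and each is contained in $T^{-1}(\widehat{S}_{(J,\emptyset)})$ in its entirety. The admissible index sets $I_-$ are exactly those subsets of $I$ disjoint from $J$, i.e., $I_- = K \subseteq I\setminus J$. Relabeling $I_-$ as $K$ yields
\begin{align}
T^{-1}(\widehat{S}_{(J,\emptyset)}) = \bigcup_{K\subseteq I\setminus J} S_{(J,K)}
\end{align}
as claimed. There is no real obstacle here since all the heavy lifting was done in Lemma~\ref{lemma:map}; the only care needed is to verify that the union is exhaustive (no sector outside this family contributes) and disjoint, both of which follow immediately from the partition property of $\mcS$ and $\widehat{\mcS}$.
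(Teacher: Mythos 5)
Your proof is correct, but it takes a genuinely different route from the paper's. The paper computes $T^{-1}(\widehat{S}_{(J,\emptyset)})$ explicitly: it starts from the pointwise preimage formula \eqref{eq:preimy}, takes the union over all $y\in\widehat{S}_{(J,\emptyset)}$, rewrites $A^{-1}(y-b)$ as a point $x'\in S_{(J,\emptyset)}$, splits the constraints $\alpha_i\geq 0$ into the subsets $K$ of indices with strictly positive coefficients, and swaps the order of the unions to recognize each piece as $S_{(J,K)}$. You instead argue abstractly: since $\mcS$ partitions the domain and $\widehat{\mcS}$ partitions the codomain, Lemma~\ref{lemma:map} ($T(S_{(I_+,I_-)})=\widehat{S}_{(I_+,\emptyset)}$) immediately forces each sector $S_{(I_+,I_-)}$ to lie either entirely inside the preimage (when $I_+=J$) or entirely outside it (when $I_+\neq J$), and only the forward inclusion of that lemma plus pairwise disjointness of $\widehat{\mcS}$ is actually needed. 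Your argument is shorter and arguably cleaner as a proof of the set identity; what the paper's computational route buys is the explicit coordinate parametrization of the preimage in terms of $x'$ and the dual vectors $a_i^*$, which is the form reused immediately afterwards in \eqref{eq:preimsubset} and in the decision-boundary analysis of Section~\ref{sec:3}. Both are valid; no gap.
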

\begin{proof}
Starting from \eqref{eq:preimy}, the preimage of the entire set $\widehat{S}_{(J,\emptyset)}$ can computed as
\begin{align}
T^{-1}(\widehat{S}_{(J,\emptyset)})&=\bigcup_{y\in \widehat{S}_{(J,\emptyset)}}\bigg\{x\in \IR^d \,:\, x= A^{-1}(y-b)-\sum_{i \in I \setminus J} \alpha_i a_i^*, \,\, \alpha_i \geq 0 \bigg\}\\
&=\bigcup_{x'\in S_{(J,\emptyset)}}\bigg\{x\in \IR^d \,:\, x=x'-\sum_{i \in I \setminus J} \alpha_i a_i^*, \,\, \alpha_i \geq 0 \bigg\}\\
&=\bigcup_{x'\in S_{(J,\emptyset)}}\bigcup_{K \subseteq I\setminus J}\bigg\{x\in\IR^d \,:\, x=x'-\sum_{i \in K} \alpha_i a_i^*, \,\, \alpha_i > 0 \bigg\}\\
&=\bigcup_{K \subseteq I\setminus J}\bigcup_{x'\in S_{(J,\emptyset)}}\bigg\{x\in\IR^d \,:\, x=x'-\sum_{i \in K} \alpha_i a_i^*, \,\, \alpha_i > 0 \bigg\}
\\&
=\bigcup_{K \subseteq I\setminus J} S_{(J,K)}
\end{align}
which concludes the proof.
\end{proof}

Using Lemma~\ref{lemma:preim} we can also derive expressions for preimages of the closures of sets in $\widehat{\mcS}$. By the definition of the closure \eqref{eq:closure}, we obtain
\begin{align}
\begin{split}
T^{-1}\big(\overline{\widehat{S}}_{(J,\emptyset)}\big)&=T^{-1}\bigg(\bigcup_{K\subseteq J}\widehat{S}_{(K,\emptyset)}\bigg)=
\bigcup_{K\subseteq J}T^{-1}(\widehat{S}_{(K,\emptyset)})
\\
&=\bigcup_{K\subseteq J}\bigg(\bigcup_{L \subseteq I\setminus K} S_{(K,L)}\bigg)=\bigcup_{K\subseteq J}\overline{S}_{(K,I\setminus K)}
\end{split}
\end{align}

Now, given a subset $\hat{\omega} \subseteq \widehat{S}_{(J,\emptyset)}$ we get
\begin{align}
T^{-1}(\hat{\omega})=\bigg\{x\in \IR^d:x=x'-\sum_{i\in I\setminus J}\alpha_ia_i^*,\, \alpha_i\geq 0,\, x'\in \omega \bigg\}
\label{eq:preimsubset}
\end{align}
where $\omega=A_b^{-1}(\hat{\omega})$, i.e, the preimage of $\hat{\omega}$ under the affine transformation. To keep the notation consistent, we will continue labeling quantities related to sets seen as subsets of the codomain of $T$ using the hat symbol (e.g, $\widehat{S}_{\bfI}$ and $\hat{\omega}$), while the corresponding quantities related to the inverse image of the same subset under the affine map $A_b$ will be labeled in the same way but without the hat (e.g., $S_{\bfI}$ and $\omega$). In the special case when $\hat{\omega} \subseteq \widehat{S}_{(I,\emptyset)}$ the preimage is simply given by
\begin{align}
T^{-1}(\hat{\omega}) =\bigg\{x\in \IR^d \,:\, x= x', \ x' \in \omega \bigg\}=\omega 
\label{eq:preimageaffine}
\end{align}
as $T$ reduces to an invertible affine map on $S_{(I,\emptyset)}$. For a general set $\hat{\omega} \subseteq \IR^d_+$ its preimage under $T$ is completely described in terms of the dual basis $\{ a_i^* : i \in I\}$ and its intersection with the sets in $\partial \widehat{\mcS}$. Geometrically, the preimage of $\hat{\omega}$ is obtained by first mapping the intersection $\hat{\omega} \cap \IR^d_+$ to the cone, i.e., to $\omega \cap \overline{S}_{(I,\emptyset)}$ and then the parts on the boundary of the cone will be extended outwards in directions given by a subset of the dual vectors as illustrated in Figure~\ref{fig:preimset}. 
\begin{figure} 
\centering
\begin{subfigure}[b]{0.33\textwidth}
\centering
\includegraphics[width=\textwidth]{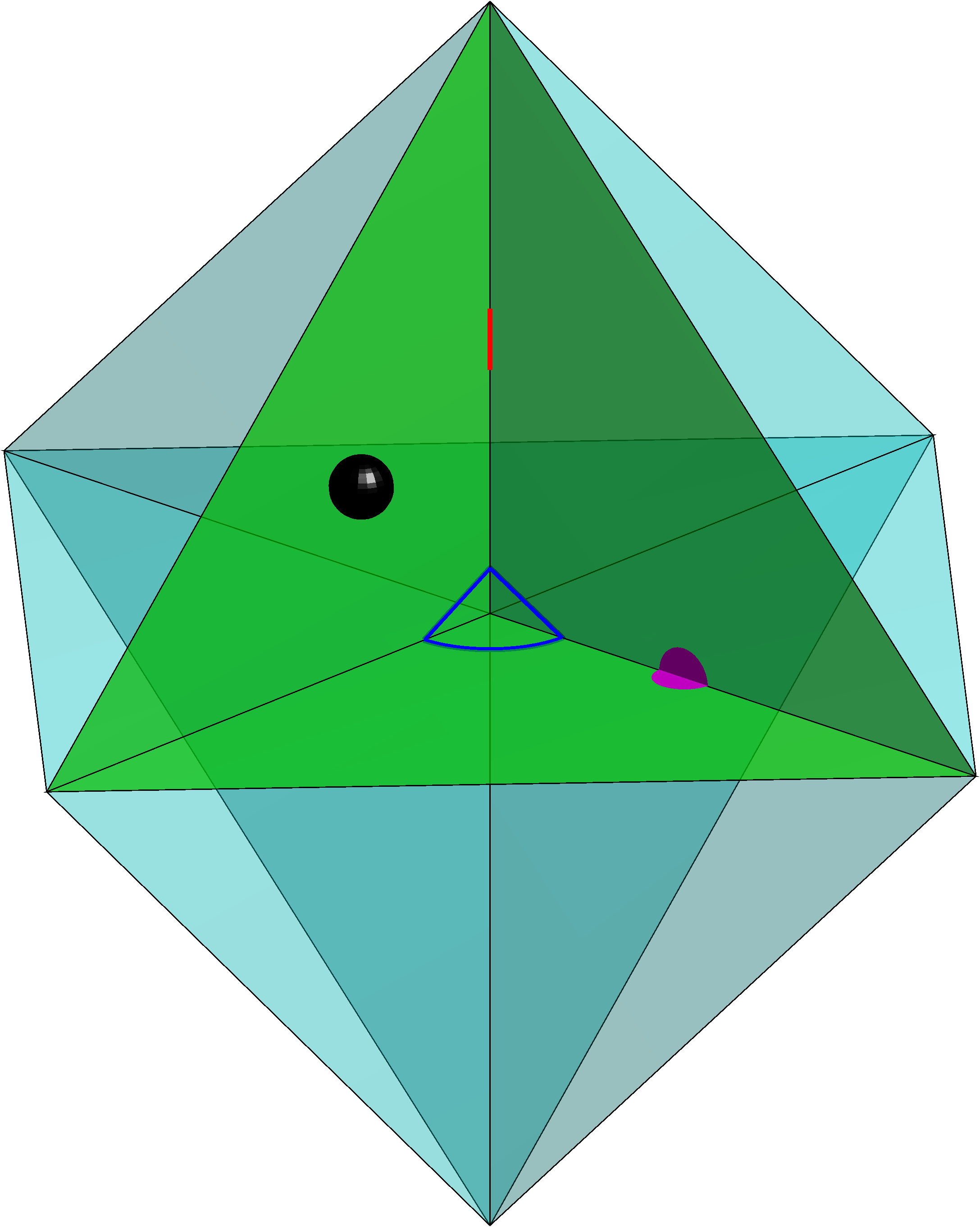}
\caption{ }
\end{subfigure}
\quad
\begin{subfigure}[b]{0.45\textwidth}
\centering
\includegraphics[width=\textwidth]{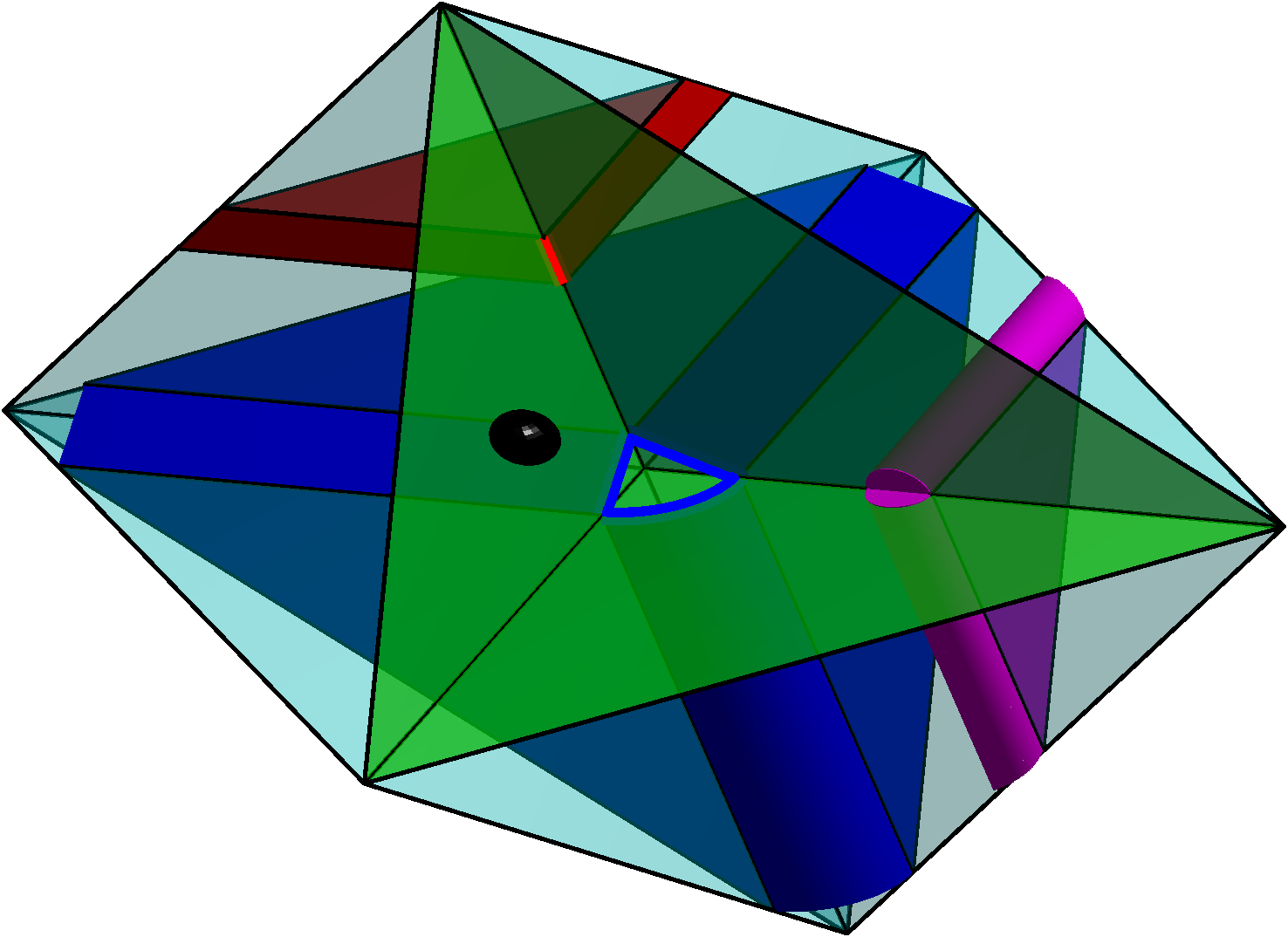}
\caption{ }
\end{subfigure}
\hfill
\caption{\emph{Preimages of Sets.}
Examples of preimages of different sets in $\IR^3_+$. \textbf{(a)} Four sets with different intersections with the sets in $\partial \widehat{\mcS}$. \textbf{(b)} The structure of the preimages depends on how the sets intersect with the boundary. If a set is completely contained in $\widehat{S}_{(I,\emptyset)}$, the preimage is given by the inverse image of the affine transformation (like the black sphere). In other cases, the preimages are spanned by a subset of the dual basis vectors outwards from the inverse image, under the map $x\mapsto Ax+b$, of the intersections with the boundary.}
\label{fig:preimset}
\end{figure}

\section{Application to Feed-Forward Networks}
\label{sec:3}

In this section, we investigate how the results above can be applied in a binary classification setting, where the decision boundary separating two classes is formulated as the zero contour to a feed-forward ReLU network.

\begin{definition}[Fully-Connected ReLU Network]\label{def:network}
A function $F:\IR^{d_{in}}\to \IR^{d_{out}}$ is called a fully-connected ReLU network of widths $d_1,\hdots,d_N$ and depth $N$ if it can be written as the composition
\begin{align}
F(x)=L \circ T^{(N)} \circ \cdots \circ T^{(1)}(x)
\label{eq:deepnet}
\end{align}
where $L:\IR^{d_{N}}\to \IR^{d_{out}}$ is an affine function and $T^{(k)}:\IR^{d_{k-1}}\to \IR^{d_{k}}_+$, $k=1,\hdots, N$, $d_0=d_{in}$ are functions of the form
\begin{align} 
T^{(k)}(x)=\mathrm{ReLU}(A^{(k)}x+b^{(k)})
\label{def:T_n}
\end{align}
parameterized by $A^{(k)}\in \IR^{d_k\times d_{k-1}}$, $b^{(k)}\in \IR^{d_k}$.
\end{definition}
\begin{figure} 
\centering
\includegraphics[width=0.7\linewidth]{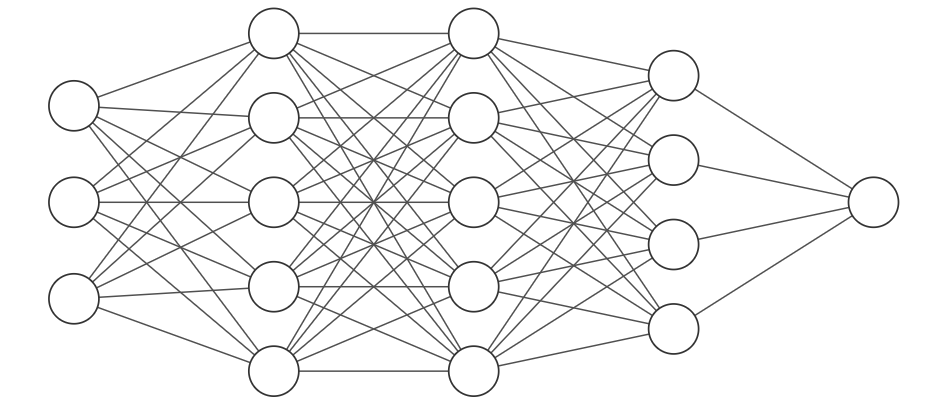}
\caption{\emph{Fully-Connected Network.}
An illustration of a fully-connected network with $N=3$ hidden layers where $d_{in}=3$, $d_1=d_2=5$, $d_3=4$ and $d_{out}=1$.}
\label{fig:fcc}
\end{figure}

Such a network architecture is illustrated in Figure~\ref{fig:fcc}.
From Definition~\ref{def:network} it is easy to see that a fully-connected ReLU network is a continuous piecewise linear function on some polygonal partition of the input domain.
%Each mapping $T_k$ is called a hidden layer and is a composition of an affine mapping $x\mapsto A_kx+b_k$ followed by the ReLU function.
Changing the internal parameters $A^{(k)}, b^{(k)}$ will affect not only the function values computed by the network but also the polygonal partition it is subordinate to in a nontrivial manner.
For simplicity, we will let $d_{out}=1$ so the network computes a real-valued function, which is a convenient choice when solving binary classification problems, and we restrict ourselves to networks of constant width equal to the input dimension, that is, $d_{in}=d_1=d_2=\hdots=d_N=d$. Our approach also generalizes to networks where $d_{in}\geq d_1\geq d_2\geq\hdots\geq d_N$, see Remark~\ref{rem:contracting}.
\begin{lem}[Canonical Network Structure] Any network on the form \eqref{eq:deepnet} where the parameters in each layer are such that the dual basis is well defined (by Definition~\ref{def:dual-basis}, or Remark~\ref{rem:contracting} in the case of a contracting layer) is equivalent to the network
\begin{align}
F(x) = \tilde{L} \circ \tilde{\pi}^{(N)} \circ \cdots \circ \tilde{\pi}^{(1)}(x)
\end{align}
where $\tilde{\pi}^{(k)}$ is the projection onto a polyhedral cone defined by the parameters 
 $\tilde{A}^{(k)}\in \IR^{d_k\times d_{1}}$, $\tilde{b}^{(k)}\in \IR^{d_k}$ of the affine function
\begin{align}
\tilde{A_b}^{(k)}(x)
=
\tilde{A}^{(k)}x+\tilde{b}^{(k)}
=
A_b^{(k)} \circ \cdots \circ A_b^{(1)}(x)
\end{align}
and where $\tilde{L}(x)=L\circ \tilde{A_b}^{(N)}(x)$.
\end{lem}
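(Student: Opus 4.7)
I would prove the equivalence by induction on the layer index $k$, repeatedly using the commutation identity $T^{(k)} = A_b^{(k)} \circ \pi^{(k)}$ from \eqref{eq:commutating} to push each projection past the preceding affine maps until only projections remain on the right. Writing $F_k := T^{(k)} \circ \cdots \circ T^{(1)}$, the target intermediate identity is
\begin{equation*}
F_k = \tilde{A_b}^{(k)} \circ \tilde{\pi}^{(k)} \circ \cdots \circ \tilde{\pi}^{(1)},
\end{equation*}
with $\tilde{A_b}^{(k)} := A_b^{(k)} \circ \cdots \circ A_b^{(1)}$ and each $\tilde{\pi}^{(j)}$ a projection onto a polyhedral cone in $\IR^{d_0}$. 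The base case $k=1$ is just \eqref{eq:commutating} itself, with $\tilde{\pi}^{(1)} := \pi^{(1)}$, and the full lemma then follows by taking $k=N$ and setting $\tilde{L} := L \circ \tilde{A_b}^{(N)}$, which is affine since it is a composition of affine maps.

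\textbf{Inductive step.} Assuming the identity for $k-1$ and applying \eqref{eq:commutating} once more yields
\begin{equation*}
F_k = A_b^{(k)} \circ \pi^{(k)} \circ \tilde{A_b}^{(k-1)} \circ \tilde{\pi}^{(k-1)} \circ \cdots \circ \tilde{\pi}^{(1)},
\end{equation*}
so the crux is to realize $\pi^{(k)} \circ \tilde{A_b}^{(k-1)}$ as $\tilde{A_b}^{(k-1)} \circ \tilde{\pi}^{(k)}$ for a new cone projection. I would define $\tilde{\pi}^{(k)}$ as the projection onto the pullback cone $(\tilde{A_b}^{(k-1)})^{-1}\bigl(\overline{S}^{(k)}_{(I,\emptyset)}\bigr) \subset \IR^{d_0}$. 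The preimage of a polyhedral cone under an affine map is again polyhedral, with supporting hyperplanes given by the zero sets of $\rho_i^{(k)} \circ \tilde{A_b}^{(k-1)}$; a short computation identifies these as the rows of the composed map $\tilde{A_b}^{(k)}(x) = A^{(k)} \tilde{A}^{(k-1)} x + A^{(k)} \tilde{b}^{(k-1)} + b^{(k)}$, so the new cone is governed by precisely the parameters $\tilde{A}^{(k)}, \tilde{b}^{(k)}$ named in the statement. The affine equivalence of sector partitions from \eqref{eq:affinemap}--\eqref{eq:inverseaffinemap} then intertwines the two sector-wise definitions \eqref{eq:pi_def} of $\pi^{(k)}$ and $\tilde{\pi}^{(k)}$: a point in a pullback sector $\tilde{S}^{(k)}_{(I_+,I_-)}$ maps under $\tilde{A_b}^{(k-1)}$ to $S^{(k)}_{(I_+,I_-)}$ with matching coefficients, and killing the $I_-$ part on either side of $\tilde{A_b}^{(k-1)}$ gives the same result.

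\textbf{Main obstacle.} The subtle point is ensuring this commutation carries through in the contracting regime of Remark~\ref{rem:contracting}, where $\tilde{A_b}^{(k-1)}$ need not be bijective. Here I would invoke the extended decomposition $T^{(k)} = A_b^{(k)} \circ \pi^{(k)} \circ P_{V^{(k)}}$ from that remark and observe that the image of $\tilde{A_b}^{(k-1)}$ automatically lies in the relevant subspace $V^{(k)} \subset \IR^{d_{k-1}}$ on which $\pi^{(k)}$ acts, so $P_{V^{(k)}}$ can be absorbed transparently. One must then verify that the composed parameters $\tilde{A}^{(k)} = A^{(k)} \tilde{A}^{(k-1)}$ and $\tilde{b}^{(k)}$ still satisfy the dual-basis conditions of Remark~\ref{rem:contracting} in $\IR^{d_0}$ — which is exactly the hypothesis the lemma places on each layer individually, combined with the rank statement that the row span of $A^{(k)}$ survives pullback through $\tilde{A}^{(k-1)}$. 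Once this subspace bookkeeping is settled, the rest of the induction reduces to the affine-invariance argument already carried out in the non-contracting case.
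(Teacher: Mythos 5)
Your proposal is correct and follows essentially the same route as the paper, whose entire proof is the one-line observation that the result follows from the commutation property \eqref{eq:commutating} together with the fact that compositions of affine maps are affine; your induction simply makes this explicit, correctly identifying $\tilde{\pi}^{(k)}$ as the projection onto the pullback cone $\{x : \tilde{A}^{(k)}x+\tilde{b}^{(k)} \geq 0\}$ and verifying the intertwining sector by sector via \eqref{eq:pi_def}. One inaccuracy in your sketch of the contracting case: the image of $\tilde{A_b}^{(k-1)}$ is all of $\IR^{d_{k-1}}$ (since $\tilde{A}^{(k-1)}$ has full row rank as a composition of surjective maps), so it is \emph{not} contained in $V^{(k)}$; the correct way to absorb $P_{V^{(k)}}$ is to note that $\pi^{(k)}\circ P_{V^{(k)}}$ is precisely the cone projection of Remark~\ref{rem:contracting} and that $P_{V^{(k)}}\circ \tilde{A_b}^{(k-1)}$ is again affine, after which your pullback argument goes through unchanged.
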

\begin{proof}
This follows directly from the commutating property \eqref{eq:commutating} of a ReLU layer and the fact that compositions of affine functions are affine functions.
\end{proof}

\subsection{Classification Model Problem}

Let $X_1, X_2 \subset \IR^{d}$ be two sets, each identified with one of two classes, such that $X_1 \cap X_2 = \emptyset$. In general, these two underlying sets are unknown and we are only given a set of samples from them. The classification problem can be formulated as finding a real-valued network $F$ separating the two sets in the following sense
\begin{align}
\begin{cases}
F(x) > 0, \quad \text{if } x \in X_1\\
F(x) < 0, \quad \text{if } x \in X_2
\end{cases}
\end{align} 
Points are then classified based on the sign after evaluating them using $F$. The conditions above split the input domain $\IR^d$ into the super- and sublevel sets, $\{x \in \IR^{d} : F(x) > 0 \}$ and $\{x \in \IR^{d} : F(x) < 0 \}$,
called the decision regions of $F$. These regions define the network classifier in the sense that points will be classified by the network based on which region in $\IR^d$ they belong to. The decision regions are separated by the hypersurface
\begin{align} \label{eq:decision-bdry-general}
\Gamma = \{x \in \IR^{d} \, : \, F(x)=0\}
\end{align}
called the decision boundary of the network $F$. Mathematically, $\Gamma$ is precisely the preimage of $\{0\}$ under $F:\IR^d\rightarrow \IR$. Consequently, a binary classification problem is solved if and only if $F$ changes sign over $\Gamma$, separating the sets $X_1$ and $X_2$.

\paragraph{Condition on ReLU Layers.}
Our geometric interpretation of how fully-connected ReLU layers affect the data in this classification setting gives a fundamental condition on each such layer. The polyhedral cone defined through the parameters of the first layer will intersect $\IR^d$ where the two disjoint sets $X_1$ and $X_2$ live. The sets will then be projected onto parts of the cone, as described above, and mapped affinely to $\IR^d_+$. In this way, the data will evolve through the network by repeatedly applying such transformations for each ReLU layer present in the network. During the actual training of the network, the parameters of each layer are optimized which geometrically means that the shapes of the associated polyhedral cones are changing. Now, let $\tilde{X}_1$ and $\tilde{X}_2$ denote the images of $X_1$ and $X_2$, respectively, under the composition of the first $n$ ReLU layers in a network. Then, if we apply one more ReLU layer $T$ we must have
\begin{equation}
\pi(\tilde{X}_1)\cap\pi(\tilde{X}_2)=\emptyset
\label{eq:not_mixed}
\end{equation}
where $\pi$ is the projection defined by $T$. Otherwise, the two data sets identified with different classes will be mixed; thus, the binary classification problem cannot be solved. The condition in \eqref{eq:not_mixed} restricts the possible positions and orientations of the cones identified with each ReLU layer. Thus, if a network solves the binary classification problem, then \eqref{eq:not_mixed} is a necessary condition for each layer in that network.

\subsection{Decision Boundaries for Shallow Networks}

We continue by analyzing decision boundaries for networks with one hidden layer. Consider a network $F:\IR^d\rightarrow \IR$ on the form
\begin{align} \label{eq:shallow}
F(x)= L \circ T(x)
\end{align}
where the hidden layer $T$ is a fully-connected ReLU layer \eqref{eq:map} and the output layer $L:\IR^d \rightarrow \IR$ is an affine transformation
\begin{align}
L(y) = \hat{a}^L \cdot y + \hat{b}^L
\end{align}
with parameters $\hat{a}^L \in \IR^d$ and $\hat{b}^L \in \IR$.
Note that the non-linear behavior of this network, defined by the composition
$\IR^d \stackrel{T} \longrightarrow \IR^d_+ \stackrel{L} \longrightarrow \IR$,
is contained in the hidden layer $T$.
By the analysis above $T$ reduces to \eqref{eq:TonSector-last} on each sector in $\mcS$, and likewise, its preimage $T^{-1}$ on each sector in $\widehat{\mcS}$ reduces to \eqref{eq:preimsubset}.
We are now interested in detailing the decision boundary \eqref{eq:decision-bdry-general} when $F(x)$ is given by a shallow network \eqref{eq:shallow}.
The output of $T$ will lie on $\overline{\widehat{S}}_{(I,\emptyset)}$ while the input to $L$ generating a zero output will lie on the hyperplane
\begin{align}
\widehat{P} = \mathrm{ker}(L) = \{y\in\IR^d: \hat{a}^L\cdot y + \hat{b}^L=0\}
\end{align}
Hence, we realize that the decision boundary can be expressed as the preimage $T^{-1}$
of the intersection between $\widehat{P}$ and $\overline{\widehat{S}}_{(I,\emptyset)}$.
By further decomposing $\overline{\widehat{S}}_{(I,\emptyset)}$ into sectors in $\widehat{\mcS}$, we arrive at the following expression for the decision boundary to a shallow network.
\begin{align} \label{eq:Gamma-preim}
\Gamma
&
=
\{ x \in \IR^d \,:\, L \circ T (x) = 0 \}
%\\&
%=
%T^{-1} \big(\mathrm{ker}(L) \cap \overline{\widehat{S}}_{(I,\emptyset)} \big)
\\&
=
T^{-1} \big(\widehat{P} \cap \overline{\widehat{S}}_{(I,\emptyset)} \big)
\\&
=
T^{-1}\bigg(\widehat{P} \cap \bigcup _{J\subseteq I}\widehat{S}_{(J,\emptyset)}\bigg)
\\&
=
\bigcup_{J\subseteq I} T^{-1}\big(\widehat{P}\cap\widehat{S}_{(J,\emptyset)}\big)
\end{align}
Since we are only interested in non-degenerate cases where $\Gamma$ is an actual $(d-1)$-dimensional surface, i.e., not empty nor filling $d$-dimensional sectors in $\mcS$, we assume the intersection $\widehat{P} \cap \widehat{S}_{(I,\emptyset)}$ to be non-empty.
This implies $0\notin \widehat{P} \Leftrightarrow \hat{b}^L \neq 0$ and to simplify our description below, we also assume that $\hat{b}^L < 0$, which we can do without loss of generality since if this is not the case we can factor out $-1$ from $L$ that we instead incorporate into $T$. 

The hypersurface $\Gamma$ is a continuous piecewise linear surface where each index subset $J\subseteq I$ corresponds to one linear piece of $\Gamma$ that by \eqref{eq:preimsubset} can be expressed
\begin{align}
T^{-1}\big(\widehat{P}\cap \widehat{S}_{(J,\emptyset)}\big)
=
\bigg\{x\in \IR^d \,:\, x=x'-\sum_{i\in I\setminus J}\lambda_ia^*_i,
 \, \lambda_i\geq 0,
 \, x'\in P\cap S_{(J,\emptyset)}\bigg\}
\label{eq:preimage_intersect}
\end{align}
where $P$ is the domain hyperplane given by the preimage of $\widehat{P}$ under the affine transformation~\eqref{eq:A_b}. More explicitly, this hyperplane can be expressed
\begin{align} \label{eq:hyperplaneP}
P=\{x\in\IR^d: A_b(x) \in \widehat{P}\} = \{ x\in \IR^d: a^L \cdot x + b^L = 0 \}
\end{align}
where $a^L \in \IR^d$ and $b^L \in \IR$ are defined
\begin{align}
\begin{split}
a^L &=A^T\hat{a}^L \quad \text{and} \quad
b^L =\hat{a}^L \cdot b + \hat{b}^L
\label{eq:relation}
\end{split}
\end{align}
Hence, $\Gamma$ is completely described by the intersections $P\cap S_{(J,\emptyset)}$ for $J\subseteq I$ and the dual basis $\{a_i^*:i\in I\}$. Of particular interest are the preimages of the intersections with the $(d-1)$-dimensional faces
\begin{align}
T^{-1}\big(\widehat{P} \cap \widehat{S}_{(I\setminus \{i\}, \emptyset)}\big) = \big\{ x\in \IR^d
\,:\,
x=x' - \lambda_i a_i^*
, \
\lambda_i \geq 0
, \
x' \in P\cap S_{(I\setminus \{i\},\emptyset)} \big\}
\label{eq:preimage-face}
\end{align}
where $i\in I$. Since the remaining pieces are linear transitions between these parts $\Gamma$ is completely determined by the preimages in \eqref{eq:preimage-face}. Figure~\ref{fig:hyp_dec} shows an example of how $\Gamma$ is generated given a hyperplane as the kernel of an affine map $L$.

\begin{figure} 
\centering
\begin{subfigure}[b]{0.4\textwidth}
\centering
\includegraphics[width=0.8\textwidth]{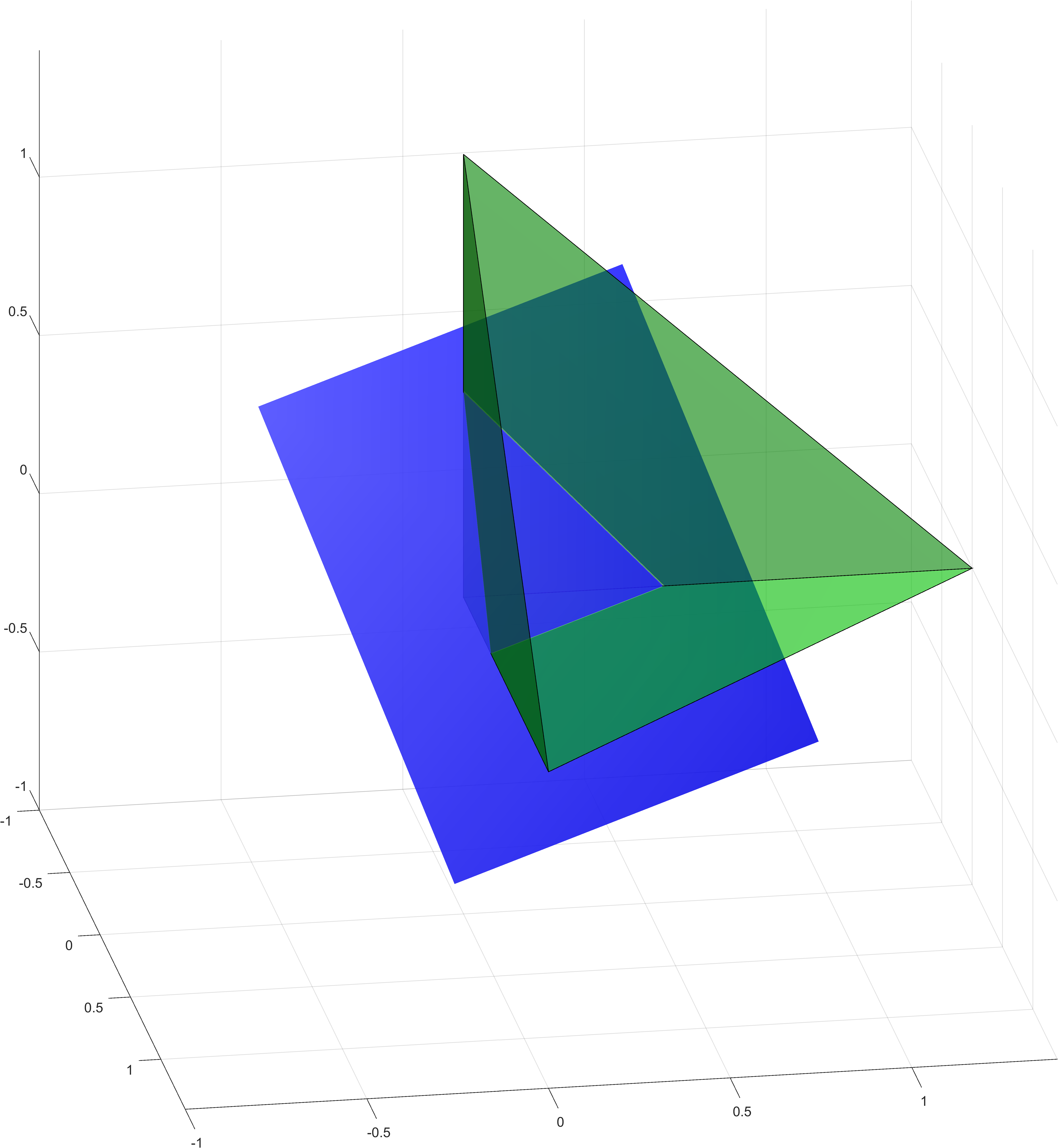}
\caption{ }
\end{subfigure}
\quad
\begin{subfigure}[b]{0.4\textwidth}
\centering
\includegraphics[width=0.8\textwidth]{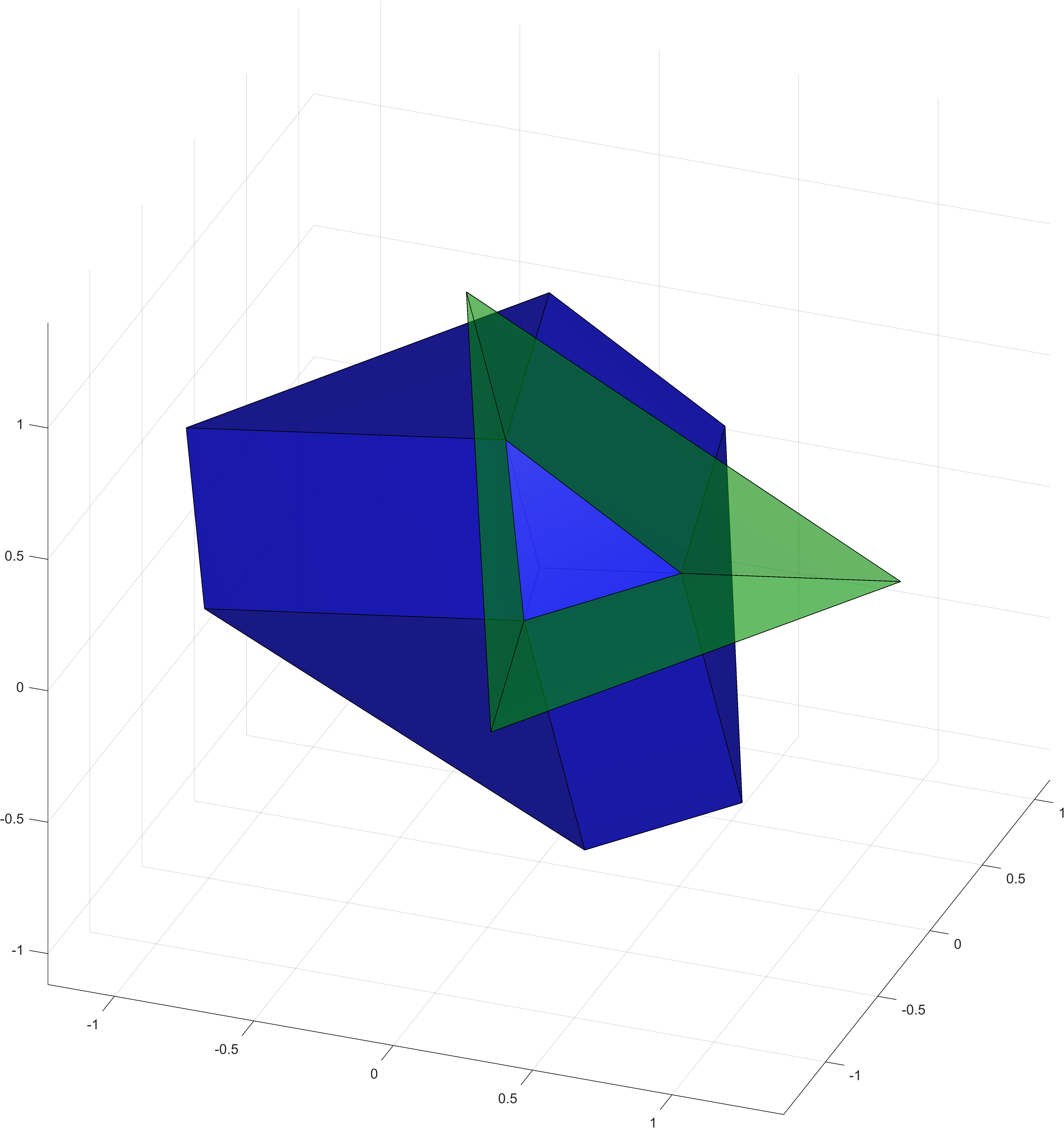}
\caption{ }
\end{subfigure}
\hfill
\caption{\emph{Decision Boundary Construction.}
The creation of the decision boundary $\Gamma$ as the preimage of the hyperplane $\widehat{P}=\text{ker}(L)$ under the ReLU layer $T$. \textbf{(a)} A hyperplane $\widehat{P}$ intersecting the non-negative orthant $\overline{\widehat{S}}_{(I,\emptyset)}=\IR^d_+$. The set $\widehat{P} \cap(\IR^d \setminus \IR^d_+)$ will have an empty preimage. \textbf{(b)} The central piece $\widehat{P}\cap \widehat{S}_{(I,\emptyset)}$ is transformed by the inverse of the affine map to $P\cap S_{(I,\emptyset)}$. Each non-empty intersection of $P$ with the sets in $\partial \mcS$ will generate a linear piece spanned by a subset of the dual vectors. The union of all these linear pieces defines $\Gamma$.}
\label{fig:hyp_dec}
\end{figure}

According to \eqref{eq:preimage-face}, $\alpha_i^*$ is a tangent vector to $T^{-1}(\widehat{P} \cap \widehat{S}_{(I\setminus \{i\})})$
whose direction relative to the central linear piece $P\cap S_{(I,\emptyset)}$ (with normal direction $a^L$) gives an indication on how $\Gamma$ curves. Hence, the signs of the scalar products 
\begin{align}
\{ a_i^* \cdot a^L : i \in I \}
\end{align}
characterize the geometry of the surface. If they are all of one sign, the surface is a boundary of a convex set. Otherwise, it's a saddle surface in the sense that some pieces are curved towards the central piece and others away from it.

\paragraph{Intersection Values.}
Since $0\notin \widehat{P}$ by assumption it follows that $x_0\notin P$ and hence the hyperplane $P$ is completely determined by its intersections with the lines $L_i = \{ x_0 + t a_i^*: t \in \IR \}$. Assuming general position of $P$ (not parallel with any of the lines $L_i$) there, for each $i\in I$, is a parameter $t_i\neq 0$ specifying where the line $L_i$ intersects the hyperplane $P$ such that 
\begin{align} \label{eq:Pintersections}
x_0+t_i a_i^* \in P
\end{align}
Here we require at least one $t_i>0$, because if all $t_i<0$ then $P$ does not intersect $\overline{S}_{(I,\emptyset)}$ at all and hence $\Gamma = \emptyset$, which breaks our assumption that $\Gamma$ is a $(d-1)$-dimensional surface.
Inserting the intersection points \eqref{eq:Pintersections} in the hyperplane equation \eqref{eq:hyperplaneP} gives
\begin{align}
0 = a^L \cdot (x_0 + t_i a_i^*) + b^L = t_i (a^L \cdot a_i^*) + a^L \cdot x_0 + b^L, \quad i\in I
\label{eq:signs}
\end{align}
By the assumption that $\hat{b}^L<0$ and the relations \eqref{eq:relation} it follows that $a^L \cdot x_0 + b^L<0$, which in combination with \eqref{eq:signs} gives the inequality
\begin{align}
t_i (a^L \cdot a_i^*) > 0
\end{align}
and hence, we conclude that the sign of $t_i$ is the same as the sign of $a^L \cdot a_i^*$. We also see that 
\begin{align}
a^L \cdot a_i^*=A^{T}\hat{a}^L \cdot a_i^* = \hat{a}^L \cdot A a_i^*=\hat{a}^L\cdot e_i=\hat{a}^L_i
\end{align}
where $\hat{a}^L_i$ is the $i$:th component of the vector $\hat{a}^L$. Note that $\hat{a}^L$ is an actual training parameter in $L$, whose components' signs directly determine how $\Gamma$ curves relative to the central piece and the signs of the values $t_i$. The values of $t_i$ can also be calculated from the parameters in $L$ through the equation
\begin{align}
t_i=\frac{-\hat{b}^L}{\hat{a}_i^L}
\end{align}

\paragraph{Intersections.} 
In terms of these values we will now describe the intersections $P\cap S_{(J,\emptyset)}$ used in \eqref{eq:preimage_intersect}, the expression for the linear pieces of $\Gamma$.
Let $J\subseteq I$ and recall from \eqref{eq:S} that $x \in S_{(J,\emptyset)}$ has the expansion $x=x_0 + \sum_{j \in J}\alpha_j a_j^*$ with coefficients $\alpha_j >0$.
The intersection $P \cap S_{(J,\emptyset)}$ is the set of points $x \in S_{(J,\emptyset)}$ satisfying
\begin{align}
0
&=
a^L \cdot x + b^L
\\&=
a^L \cdot \bigg(x_0 + \sum_{i\in J}\alpha_i a_i^*\bigg)+b^L
\\&=
a^L \cdot x_0  + \sum_{j\in J} \alpha_j a^L \cdot a_j^*+ b^L
\\&=
(a^L \cdot x_0 + b^L)\bigg(1-\sum_{j\in J} \frac{\alpha_j}{t_j} \bigg)
\label{eq:twoparenth}
\end{align}
where we in the last equality used the identity $a^L \cdot a^*_j=-\frac{a^L \cdot x_0 + b^L}{t_j}$ deduced from \eqref{eq:signs}.
Since $x_0\notin P \Leftrightarrow (a^L \cdot x_0+ b^L)\neq 0$, the second parenthesis in \eqref{eq:twoparenth} must be zero, yielding the additional condition
\begin{align}
\sum_{j\in J} \frac{\alpha_j}{t_j} =1
\label{eq:condition}
\end{align}
on the coefficients $\alpha_j>0$. % for the expansion of points $x \in P \cap S_{(J,\emptyset)}$.
This means that the intersection $P\cap S_{(J,\emptyset)}$ can be expressed
\begin{align}
\label{eq:intersection}
P\cap S_{(J,\emptyset)}=\bigg\{x\in \IR^d \,:\, x=x_0+\sum_{j\in J}\alpha_j a_j^*,
\, \alpha_j>0,
\, \sum_{j\in J}\frac{\alpha_j}{t_j}=1\bigg\}
\end{align}
Introducing the index set $J^-=\{j\in J: t_j < 0\}$, we note that in case $J^-=J$ the condition \eqref{eq:condition} cannot be fulfilled, and hence the intersection $P\cap S_{(J,\emptyset)}=\emptyset$. At the other extreme, when $J^-=\emptyset$, all terms in \eqref{eq:condition} are strictly positive and we can deduce that $\alpha_j<t_j$, which implies that the intersection is non-empty and bounded. In the remaining case where $\emptyset \subset J^- \subset J$, we have both positive and negative terms in \eqref{eq:condition}, which implies that the intersection is non-empty and unbounded. Due to the additional condition \eqref{eq:condition}, all non-empty intersections will be sets of dimension $|J|-1$.

Combining \eqref{eq:intersection} with \eqref{eq:preimage_intersect} gives the following expression for the preimages we are interested in. For all $J\subseteq I$ such that $J^- \neq J$ we get non-empty preimages
\begin{align}
\label{eq:preimages_intersections}
&T^{-1}\big(\widehat{P}\cap \widehat{S}_{(J,\emptyset)}\big)=\\
&\ \  \bigg\{x\in \IR^d \,:\, x=x_0+\sum_{j\in J}\alpha_ja^*_j-\sum_{i\in I\setminus J}\lambda_ia^*_i,
\, \lambda_i\geq 0,
\, \alpha_j>0,
\, \sum_{j\in J} \frac{\alpha_j}{t_j}=1\bigg\}
\nonumber
\end{align}
and if $J^- = J$ the preimage is empty.

Using these descriptions we can now calculate the number of linear pieces of $\Gamma$.
\begin{thm}[Number of Linear Pieces]
\label{thm:linearpieces}
The number of linear pieces of the decision boundary $\Gamma=\{x\in \IR^d: L \circ T(x)=0\}$ of a fully-connected ReLU network $F:\IR^d\rightarrow \IR$ with one layer is
\begin{align}
\boxed{
2^d-2^m
}
\end{align}
where $m=|\{i\in I: t_i<0\}|$.
\end{thm}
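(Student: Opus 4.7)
The plan is to work directly from the decomposition \eqref{eq:Gamma-preim}, which writes
\begin{align}
\Gamma = \bigcup_{J\subseteq I} T^{-1}\bigl(\widehat{P}\cap\widehat{S}_{(J,\emptyset)}\bigr)
\end{align}
and to count how many terms in this union are non-empty. Each non-empty preimage is described explicitly by \eqref{eq:preimages_intersections} as a convex $(d-1)$-dimensional set contained in a single affine hyperplane, and hence contributes exactly one linear piece of $\Gamma$. Moreover, distinct index sets $J$ produce geometrically distinct pieces, since the affine span of $T^{-1}(\widehat{P}\cap\widehat{S}_{(J,\emptyset)})$ is the translate through $P\cap S_{(J,\emptyset)}$ of $\mathrm{span}\{a_i^* : i\in I\setminus J\}$, and the linear independence of the dual basis together with the general-position assumption on $P$ makes these affine hyperplanes pairwise different. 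Thus the number of linear pieces of $\Gamma$ equals the number of $J \subseteq I$ for which the preimage is non-empty.

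Next, I would invoke the characterization established immediately before \eqref{eq:preimages_intersections}: with $J^- = \{j\in J: t_j<0\}$, the defining condition $\sum_{j\in J}\alpha_j/t_j = 1$ with $\alpha_j>0$ is satisfiable if and only if $J^- \neq J$, i.e., if and only if $J$ contains at least one index $i$ with $t_i > 0$. Equivalently, the preimage is empty precisely when $J \subseteq M$, where $M = \{i\in I: t_i<0\}$ has cardinality $m$.

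The count then follows by the complement principle: out of $2^d$ total subsets of $I$, exactly $2^m$ are subsets of $M$, giving
\begin{align}
\bigl|\{J \subseteq I : T^{-1}(\widehat{P}\cap\widehat{S}_{(J,\emptyset)}) \neq \emptyset\}\bigr| = 2^d - 2^m
\end{align}
as the number of linear pieces. The only real obstacle is the bookkeeping point that each non-empty preimage constitutes one linear piece and that different $J$ yield geometrically distinct pieces; this is handled by the affine-span argument sketched above, after which the counting itself is elementary.
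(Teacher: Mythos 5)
Your proposal follows the paper's own argument essentially verbatim: both count the non-empty terms in the union $\Gamma=\bigcup_{J\subseteq I}T^{-1}\bigl(\widehat{P}\cap\widehat{S}_{(J,\emptyset)}\bigr)$, use the previously established fact that the term for $J$ is empty precisely when $J\subseteq\{i\in I: t_i<0\}$, and conclude $2^d-2^m$ by subtracting the $2^m$ subsets of that index set. Your added affine-span argument for why distinct $J$ give geometrically distinct pieces is a small (and welcome) elaboration of a point the paper asserts without proof, but it does not change the route.
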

\begin{proof}
By \eqref{eq:preimage_intersect} we see that each non-empty intersection $P\cap S_{(J,\emptyset)}$ will generate a unique linear piece of $\Gamma$. Hence, the number of non-empty intersections will determine the number of linear pieces of $\Gamma$. We have showed that the only empty intersections are those $P\cap S_{(J,\emptyset)}$ where $J\subseteq \{i\in I: t_i<0\}$. The number of such subsets is exactly $2^{m}$ where $m=|\{i\in I: t_i<0\}|$ and since $|I|=d$ we conclude that the total number of linear pieces is exactly $2^d-2^{m}$. 
\end{proof}

Theorem~\ref{thm:linearpieces} holds as long as $A$ has full rank and the hyperplane $P$ is in general position as defined before. Thus, the number of linear pieces is directly determined by $m$, i.e., the number of negative $t_i$, $i\in I$. Since $t_i=\frac{-\hat{b}^L}{\hat{a}_i^L}$ the number of linear pieces can be directly computed from the training parameters in the affine function $L$. However, we saw earlier that the signs of the values $t_i$ also control how $\Gamma$ curves relative to the central piece $P\cap S_{(I,\emptyset)}$. Thus, maximizing the number of linear pieces will restrict the potential complexity of the geometry of $\Gamma$. For example, when $m=0$ (i.e, all $t_i$ are positive), we will maximize the number of linear pieces of $\Gamma$ to $2^d-1$ but then all the dot products $\{a_i^*\cdot n\}_{i\in I}$ are positive (since the sign of $a_i^*\cdot n$ is the same as the sign of $t_i$ according to \eqref{eq:signs}), so $\Gamma$ will be a convex hypersurface. This shows that there is a trade-off between the complexity in terms of the number of linear pieces and the complexity in terms of the curvature of the decision boundary.

\begin{definition}[Canonical Decision Boundaries] \label{def:canonical-decision-bdry}
Let $\widehat{F}$ be a shallow network \eqref{eq:shallow} where the parameters in the ReLU layer \eqref{eq:map}, which we denote $\widehat{T}$, are $A=I_d$ and $b=0$. In this special case $a_i^* = e_i$, $x_0=0$, $\mcS = \widehat{\mcS}$, and any hyperplane $P=\widehat{P}$. By \eqref{eq:Pintersections}, there for each hyperplane in general position are values $\{t'_i\in\IR : i \in I\}$ such that the points $t'_i e_i \in \IR^d$ define the intersection between the hyperplane and the affine sector.
For each $m \in \{ 0,1,\dots,d-1 \}$, we let $P_m=\widehat{P}_m$ be the unique hyperplane yielding intersection values
\begin{align} \label{eq:canonical-t}
t_i' &=\begin{cases} -1 \quad &\text{if $i\leq m$}\\
1 \quad &\text{if $i>m$}
\end{cases}
\end{align}
Note that we include a prime in the notation for variables defining canonical decision boundaries to simplify later comparisons to arbitrary decision boundaries.
Assuming the final affine transformation $L$ in our network $\widehat{F}$ is such that $\mathrm{ker}(L) = \widehat{P}_m$, the decision boundary induced by $\widehat{F}(x)=0$ is
\begin{align}
\widehat{\Gamma}_m= \bigcup_{J\subseteq I} \widehat{T}^{-1}\big(\widehat{P}_m\cap \widehat{S}_{(J,\emptyset)}\big)
\label{eq:Gamma_m}
\end{align}
where we by \eqref{eq:preimages_intersections} can express the preimage of each intersection 
\begin{align}
\label{eq:preimage-canonical}
&\widehat{T}^{-1}\big(\widehat{P}_m\cap \widehat{S}_{(J,\emptyset)}\big)=\\ 
&\quad \quad\quad\bigg\{x\in \IR^d \,:\, x=\sum_{j\in J}\alpha_j' e_j-\sum_{i\in I\setminus J}\lambda_i' e_i,\, \lambda_i' \geq 0,\, \alpha_j' >0,\, \sum_{j\in J} \frac{\alpha_j'}{t_j'}=1\bigg\} \nonumber
\end{align}
We call $\bigl\{\widehat{\Gamma}_m : m = 0,\dots,d-1 \bigr\}$ the set of \emph{canonical decision boundaries}.
\end{definition}

By definition, there are $d$ canonical decision boundaries, which for the case $d=3$ are illustrated in Figure~\ref{fig:canonical_surf}. As can be seen from \eqref{eq:canonical-t} the integer $m$ conforms with the prior definition $m=|\{i\in I: t_i<0\}|$, and thus we can conclude that $\widehat{\Gamma}_m$ consists of $2^d-2^m$ linear pieces according to Theorem~\ref{thm:linearpieces}. We will next show that every decision boundary induced by a shallow network \eqref{eq:shallow} is equivalent to one canonical decision boundary in the following sense.

\begin{figure} 
\centering
\begin{subfigure}[b]{0.32\textwidth}
\centering
\includegraphics[trim=135px 185px 115px 165px, clip ,width=\textwidth]{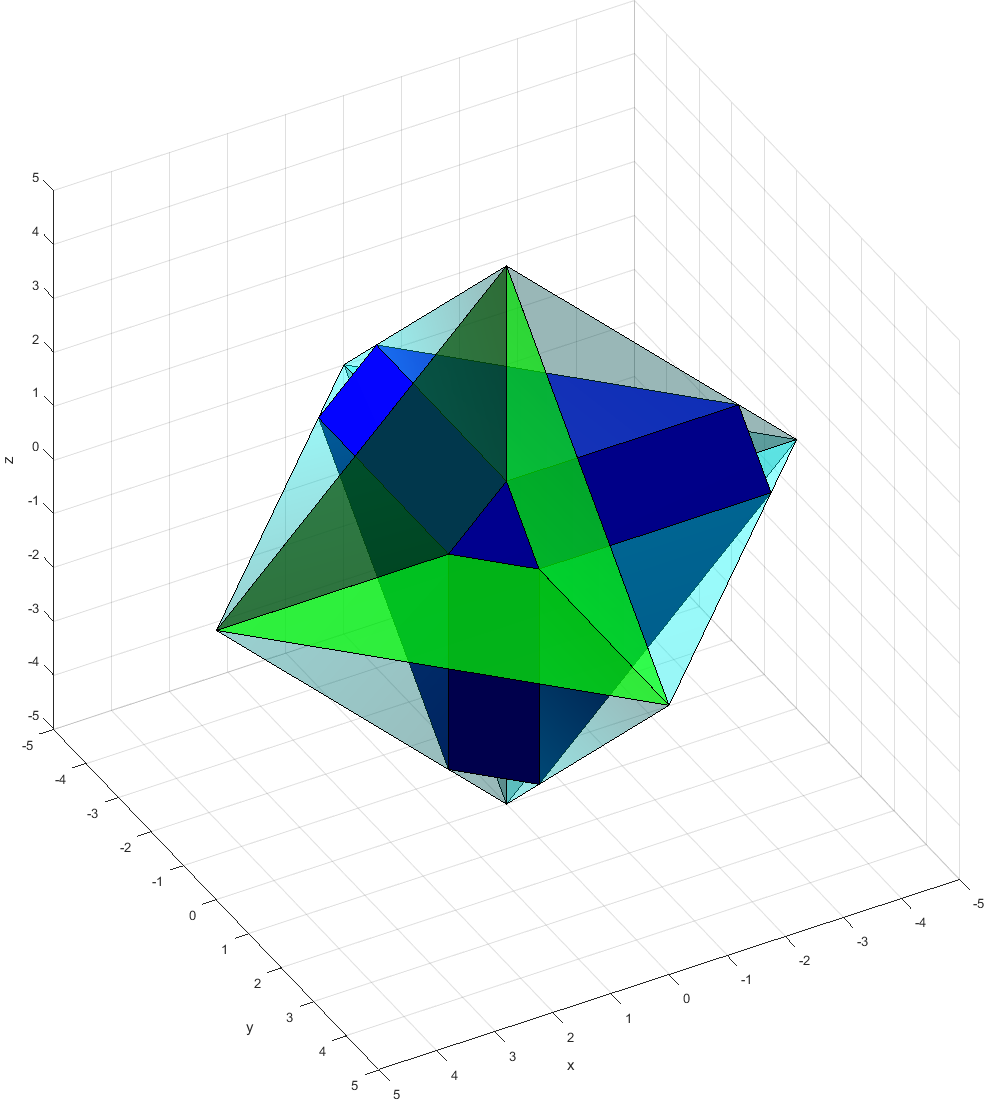}
\caption{ }
\end{subfigure}
\
\begin{subfigure}[b]{0.32\textwidth}
\centering
\includegraphics[trim=135px 185px 115px 165px, clip ,width=\textwidth]{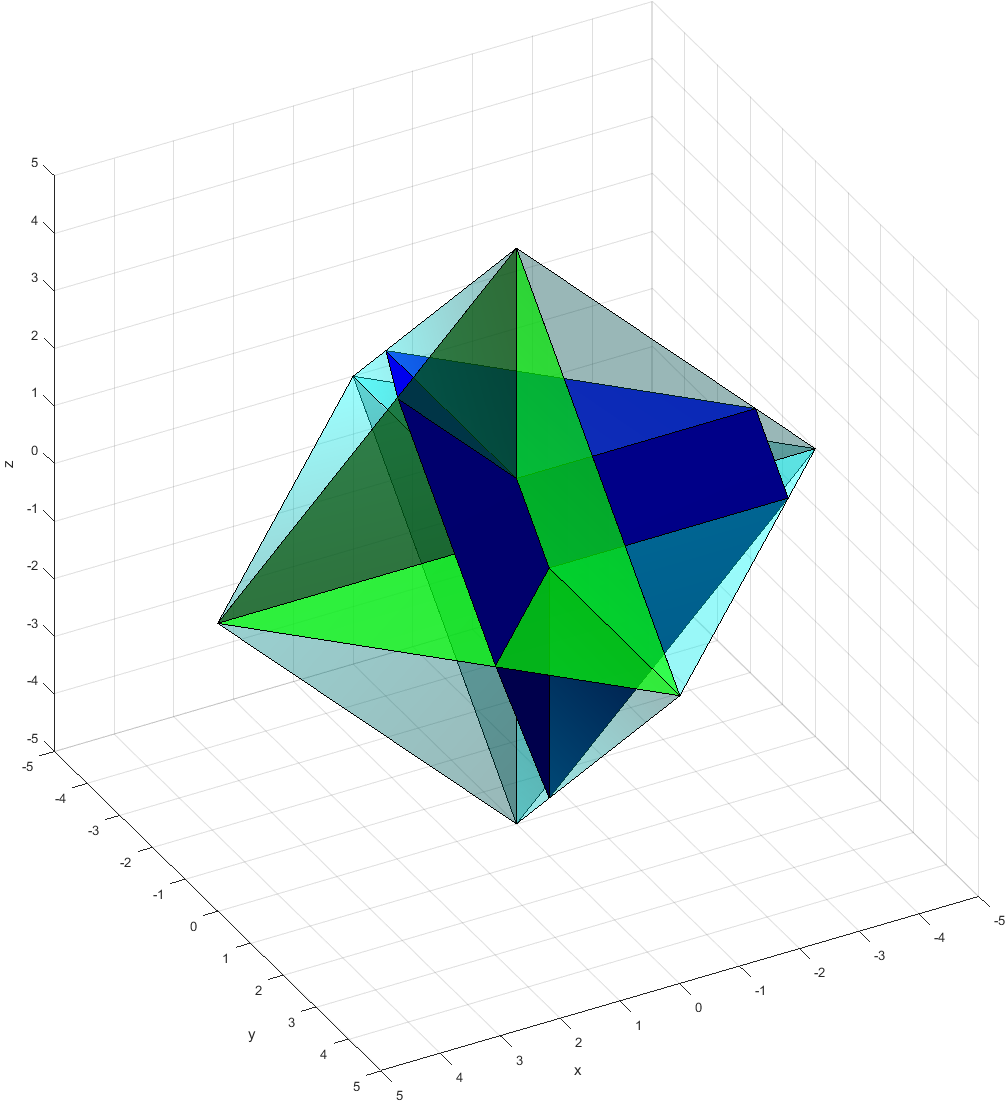}
\caption{ }
\end{subfigure}
\
\begin{subfigure}[b]{0.32\textwidth}
\centering % left bottom right top
\includegraphics[trim=135px 185px 115px 165px, clip ,width=\textwidth]{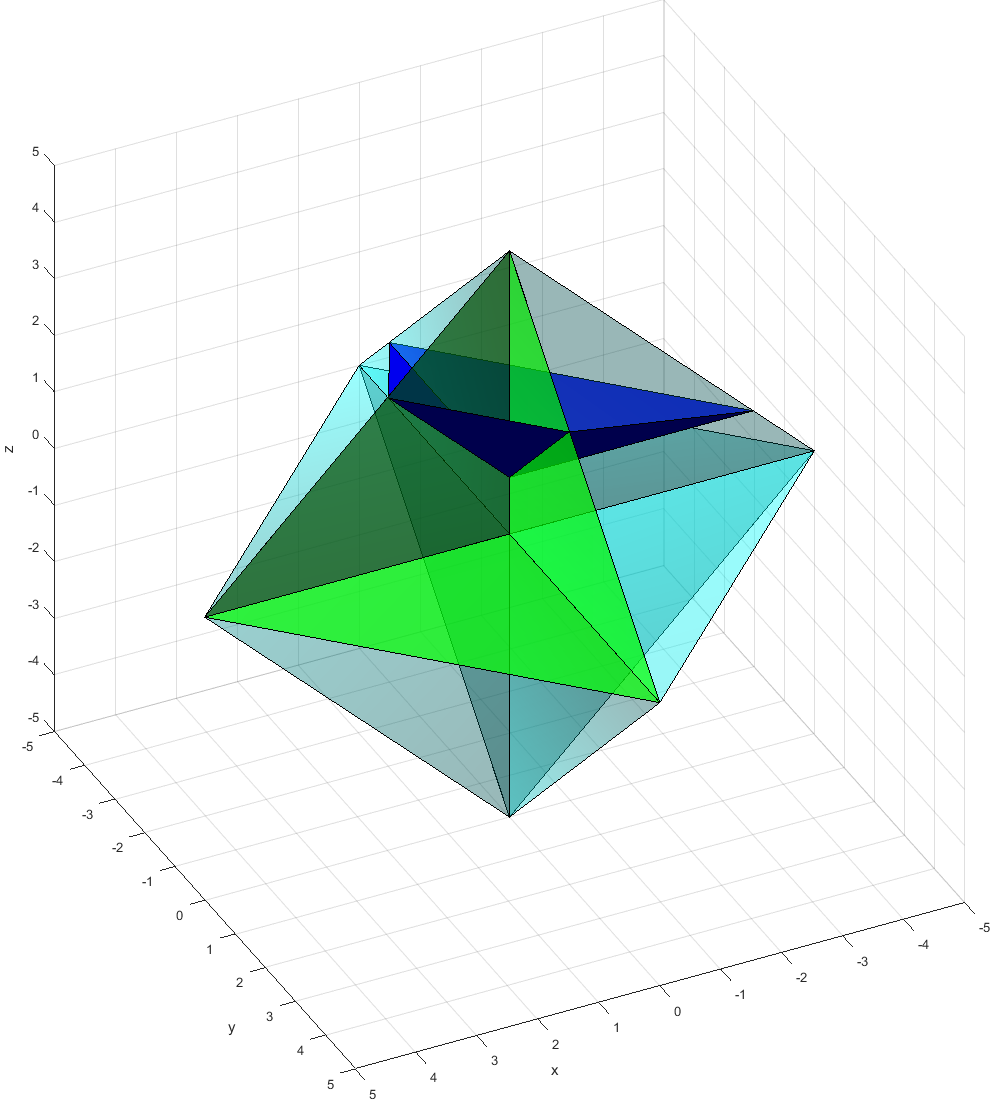}
\caption{ }
\end{subfigure}
\caption{\emph{Canonical Decision Boundaries.} For a network $F:\IR^3\rightarrow \IR$ with one fully-connected ReLU layer there are $d=3$ canonical decision boundaries. \textbf{(a)} An illustration of $\widehat{\Gamma}_0$. Since all $t'_i$ are positive, the hyperplane $\widehat{P}_0$ intersects all $\widehat{S}_{\bfI}\in \partial \widehat{S}$ except for $\widehat{S}_{(\emptyset,\emptyset)}=\{0\}$ and therefore generates the maximum of $2^3-2^0=7$ linear pieces. However, since all $t'_i$ are positive, all pieces are curved away from the central piece $\widehat{P}_0\cap \widehat{S}_{(I,\emptyset)}$ and create a convex surface. \textbf{(b)} An illustration of $\widehat{\Gamma}_1$ with $2^3-2^1=6$ linear pieces. Here, the sign of $t'_1$ is negative whereas both $t'_2$ and $t'_3$ are positive, so some of the pieces curve towards the central piece and others away from it. \textbf{(c)} An illustration of $\widehat{\Gamma}_2$ with $2^3-2^2=4$ linear pieces all curving towards the central piece.}
\label{fig:canonical_surf}
\end{figure}

\begin{definition}[Equivalence of Decision Boundaries] 
\label{def:equivalence}
Two decision boundaries $\Gamma$, $\Gamma'\subset \IR^d$ are \emph{equivalent} if there exists an invertible affine map $M:\IR^d\rightarrow \IR^d$ such that $M(\Gamma')=\Gamma$.
\end{definition}

\begin{thm}
\label{thm:equivalence}
A fully-connected ReLU network $F:\IR^d\rightarrow \IR$ with one hidden layer can only generate $d$ non-equivalent decision boundaries.
\end{thm}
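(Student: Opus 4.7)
The plan is to show that for an arbitrary decision boundary $\Gamma$ produced by a shallow network, we can construct an explicit invertible affine map $M$ carrying one of the canonical boundaries $\widehat{\Gamma}_m$ (from Definition~\ref{def:canonical-decision-bdry}) to $\Gamma$. By Theorem~\ref{thm:linearpieces}, the integer $m=|\{i\in I : t_i<0\}|$ is an invariant of $\Gamma$ (it is determined by the number of linear pieces, which is preserved by any affine bijection), so the map $m\mapsto \widehat{\Gamma}_m$ gives the only candidate target. Since $m$ can take exactly the $d$ values $0,1,\dots,d-1$ (recall we require at least one $t_i>0$), there are at most $d$ equivalence classes, and combining the two facts yields exactly $d$.

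The construction proceeds as follows. From \eqref{eq:preimages_intersections}, the boundary $\Gamma$ is determined by $x_0$, the dual basis $\{a_i^* : i\in I\}$, and the intersection values $\{t_i : i \in I\}$. Let $m=|\{i\in I : t_i<0\}|$ and fix any permutation $\sigma:I\to I$ with $\sigma(\{1,\dots,m\})=\{i\in I : t_i<0\}$. I then define the invertible affine map
\begin{align}
M(y)=x_0+\sum_{i\in I} y_i\,|t_{\sigma(i)}|\,a_{\sigma(i)}^*
\end{align}
Invertibility follows because $\{a_i^*\}_{i\in I}$ is a basis and every $|t_{\sigma(i)}|>0$ (general position of $P$).

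To verify $M(\widehat{\Gamma}_m)=\Gamma$, I would take a generic point on $\widehat{\Gamma}_m$, which by \eqref{eq:preimage-canonical} has the form $y=\sum_{j\in J}\alpha_j' e_j - \sum_{i\in I\setminus J}\lambda_i' e_i$ with $\lambda_i'\geq 0$, $\alpha_j'>0$, and $\sum_{j\in J}\alpha_j'/t_j'=1$. Applying $M$ gives
\begin{align}
M(y)=x_0+\sum_{j\in J}\alpha_j'|t_{\sigma(j)}|a_{\sigma(j)}^*-\sum_{i\in I\setminus J}\lambda_i'|t_{\sigma(i)}|a_{\sigma(i)}^*
\end{align}
so setting $J'=\sigma(J)$, $\alpha_{\sigma(j)}=\alpha_j'|t_{\sigma(j)}|$ and $\lambda_{\sigma(i)}=\lambda_i'|t_{\sigma(i)}|$ rewrites $M(y)$ in precisely the form of a point of $T^{-1}(\widehat{P}\cap \widehat{S}_{(J',\emptyset)})$ from \eqref{eq:preimages_intersections}. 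The bijection $J\mapsto \sigma(J)$ ranges over all subsets of $I$, so the union over $J$ matches the union defining $\Gamma$ in \eqref{eq:Gamma_m}.

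The main obstacle, and the step that drives the choice of $\sigma$, is matching the linear constraint $\sum_{j\in J}\alpha_j'/t_j'=1$ to $\sum_{k\in J'}\alpha_k/t_k=1$. Under the substitution above, $\alpha_k/t_k=\alpha_{\sigma(j)}/t_{\sigma(j)}=\alpha_j'\operatorname{sign}(t_{\sigma(j)})$, while $\alpha_j'/t_j'=\alpha_j'\operatorname{sign}(t_j')$. These agree precisely because $\sigma$ was chosen so that $\operatorname{sign}(t_{\sigma(j)})=\operatorname{sign}(t_j')$ for every $j\in I$, i.e., the canonical sign pattern $(-,\dots,-,+,\dots,+)$ with $m$ minus signs matches the sign pattern of $t$ after permutation. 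The strict positivity of the $\alpha_j$ and non-negativity of the $\lambda_i$ are preserved since each $|t_{\sigma(i)}|>0$. Finally, since invertible affine maps preserve the number of linear pieces of a piecewise linear hypersurface, and $\widehat{\Gamma}_m$ has $2^d-2^m$ pieces by Theorem~\ref{thm:linearpieces}, distinct values of $m$ give non-equivalent canonical boundaries, so exactly $d$ equivalence classes are realized.
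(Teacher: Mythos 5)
Your proposal is correct and follows essentially the same route as the paper: the same affine map $M(y)=x_0+\sum_i y_i|t_{\sigma(i)}|a^*_{\sigma(i)}$ (the paper writes it as $x_0+A^{-1}DQ_\sigma y$, which acts identically on basis vectors), the same sign-matching choice of $\sigma$, the same piecewise verification of the constraint $\sum_{j}\alpha_j/t_j=1$, and the same piece-count argument for non-equivalence of the canonical boundaries.
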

\begin{proof}
As above we consider the generic setting when the network $F$ generates a decision boundary $\Gamma$, whose intersection values $\{t_i\in\IR : i \in I\}$ in \eqref{eq:Pintersections} are such that $t_i \neq 0$ and $m=|\{i\in I: t_i<0\}| < d$.
We will now show that this $\Gamma$ is equivalent to the canonical decision boundary $\widehat{\Gamma}_m$ of Definition~\ref{def:canonical-decision-bdry} by constructing an invertible affine transformation $M$ that maps $\widehat{\Gamma}_m$ onto $\Gamma$.
Consider the map $M:\IR^d\rightarrow \IR^d$ given by 
\begin{align}
M(x)=x_0 + A^{-1}D Q_\sigma x
\label{eq:Map-M}
\end{align}
where $x_0=-A^{-1}b$ is the projection cone apex, $D$ is a diagonal matrix with elements $D_{ii}=|t_i|$, and $Q_\sigma$ is a permutation matrix.
The action of the linear part of this map on a basis vector $e_i$ by \eqref{eq:dual} is
\begin{align}
A^{-1}DQ_\sigma e_i= A^{-1}|t_{\sigma(i)}|e_{\sigma(i)}=|t_{\sigma(i)}|a_{\sigma(i)}^*
\label{eq:actonei}
\end{align}
The permutation matrix $Q_\sigma$ is constructed such that it realizes a permutation $Q_\sigma e_i=e_{\sigma(i)}$ where the bijective map $\sigma: I\mapsto I$ has the effect of sorting the intersection values of $\Gamma$, i.e., that $t_{\sigma(i)}\leq t_{\sigma(i+1)}$ for $i=1,\dots,d-1$.
By this construction the signs of the permuted intersection values correspond to those of the canonical decision boundary such that  $\mathrm{sign}(t_{\sigma(j)})=\mathrm{sign}(t_j')$ and since $t'_j\in \{-1,1\}$ we have the relation
\begin{align} \label{eq:intersectrel}
t'_j|t_{\sigma(j)}| = t_{\sigma(j)}
\end{align}
We will now show that the action of \eqref{eq:Map-M} is such that
\begin{align} \label{eq:MGammaEquiv}
M(\widehat{\Gamma}_m)
&=
\bigcup_{J\subseteq I} M\bigl( \widehat{T}^{-1}\big(\widehat{P}_m\cap \widehat{S}_{(J,\emptyset)}\big) \bigr)
=
\bigcup_{K\subseteq I} \widehat{T}^{-1}\big(\widehat{P}\cap \widehat{S}_{(K,\emptyset)}\big)
=
\Gamma
\end{align}
and, hence, that $\Gamma$ and $\widehat{\Gamma}_m$ are equivalent according to Definition~\ref{def:equivalence}.
For some $J\subseteq I$, consider the expression of the intersection preimage $\widehat{T}^{-1}\big(\widehat{P}_m\cap \widehat{S}_{(J,\emptyset)}\big)$ given in \eqref{eq:preimage-canonical}. Letting $M$ act on the coordinate expansion in \eqref{eq:preimage-canonical} and using \eqref{eq:actonei} give
\begin{align} \label{eq:Mexpansion}
M(x)
&= x_0 + A^{-1}DQ_\sigma \biggl( \sum_{j\in J}\alpha'_je_j-\sum_{i\in I\setminus J}\lambda'_i e_i \biggr)
\\&=
x_0 + \sum_{j\in J} \alpha_{\sigma(j)} |t_{\sigma(j)}|a_{\sigma(j)}^*-\sum_{i\in I\setminus J}\lambda_{\sigma(i)} |t_{\sigma(i)}|a_{\sigma(i)}^*
\\&=
x_0 + \sum_{j\in \sigma(J)} \alpha_{j} |t_{j}|a_{j}^*-\sum_{i\in I\setminus \sigma(J)}\lambda_{i} |t_{i}|a_{i}^*
\end{align}
where we defined $\alpha_{\sigma(j)}=\alpha'_j|t_{\sigma(j)}|$, $\lambda_{\sigma(i)}=\lambda'_i|t_{\sigma(i)}|$ and $\sigma(J)=\{\sigma(j):j\in J\}$.
In terms of these definitions, we can also reformulate the constraints in \eqref{eq:preimage-canonical} as
\begin{align}
\lambda_i'\geq 0 \Leftrightarrow \lambda_{\sigma(i)} \geq 0 ,
\quad
\alpha_j'>0 \Leftrightarrow \alpha_{\sigma(j)}>0 ,
\quad
\sum_{j\in J}\frac{\alpha'_j}{t_j'} = 1 \Leftrightarrow
\sum_{j\in \sigma(J)}\frac{\alpha_{j}}{t_{j}} = 1
\end{align}
The first two equivalences hold since $|t_{\sigma(i)}|>0$ and the last equivalence follows from the calculation
\begin{align} \label{eq:Mcond}
\sum_{j\in J}\frac{\alpha'_j}{t'_j}
= \sum_{j\in J}\frac{\alpha_{\sigma(j)}}{t'_j|t_{\sigma(j)}|}
= \sum_{j\in J}\frac{\alpha_{\sigma(j)}}{t_{\sigma(j)}}
= \sum_{j\in \sigma(J)}\frac{\alpha_{j}}{t_{j}}
\end{align}
where we in the second equality used \eqref{eq:intersectrel}.
Comparing \eqref{eq:Mexpansion}--\eqref{eq:Mcond} to the expression for a general intersection preimage \eqref{eq:preimages_intersections} we realize that
\begin{align}
M\bigl( \widehat{T}^{-1}\big(\widehat{P}_m\cap \widehat{S}_{(J,\emptyset)}\big) \bigr)
= \widehat{T}^{-1}\big(\widehat{P} \cap \widehat{S}_{(\sigma(J),\emptyset)}\big)
\end{align}
Since $\sigma$ is a bijection we have $\sigma(J)\neq\sigma(K)$ for $J,K \subseteq I$ with $J\neq K$ and for every $K\subseteq I$ there is a $J\subseteq I$ such that $\sigma(J)=K$.
Hence, we conclude that \eqref{eq:MGammaEquiv} holds.

We end this proof by noting that the $d$ canonical decision boundaries are non-equivalent. Since an invertible affine map acting on a decision boundary cannot change the number of linear pieces, which by Theorem~\ref{thm:linearpieces} is different for each $m$, any pair of canonical decision boundaries $\widehat{\Gamma}_{m_1},\widehat{\Gamma}_{m_2}$ with $m_1 \neq m_2$ cannot be equivalent.
This gives us that for a fully-connected ReLU network with one hidden layer, there are only $d$ different non-equivalent decision boundaries $\Gamma$ (one for each value of $m$).
\end{proof}

This result implies that if $\Gamma \subseteq \IR^d$ is a decision boundary separating two classes of points $X_1$ and $X_2$, then there is an affine map $M:\IR^d\rightarrow \IR^d$ such that the transformed sets $M(X_1)$ and $M(X_2)$ are separated by $\Gamma_m$ for some $0\leq m<d$. In this sense, it is enough to analyze the properties of the canonical decision boundaries since every other possible decision boundary can be obtained through an affine transformation.

\subsection{Decision Boundaries for Deep Networks}
When appending additional ReLU layers to the network any precise complexity estimates for decision boundary, such as those provided for shallow networks in Theorem~\ref{thm:linearpieces} and Theorem~\ref{thm:equivalence}, are difficult to show.
Instead, we here provide some more general commentary on the effect of network depth.

Consider a network on the form \eqref{eq:deepnet} with $N$ hidden ReLU layers and width $d$. Define
\begin{align}
\Gamma^{(k)}=\big\{x\in \IR^d: L \circ T^{(N)} \circ \hdots \circ T^{(k)}(x)=0 \big\}
,\quad\text{where $1\leq k \leq N$}
\end{align}
We can interpret $\Gamma^{(k)}$ as the decision boundary of the network when we have removed the first $k-1$ layers, so the actual decision boundary is simply $\Gamma=\Gamma^{(1)}$. Removing all but one hidden layer gives a shallow network, which means $\Gamma^{(N)}$ is characterized in the previous section. Moreover, we have the recursive relation 
\begin{align}
\boxed{
\Gamma^{(k)}={T^{-(k)}}\big(\Gamma^{(k+1)}\cap \overline{\widehat{S}}_{(I,\emptyset)}\big)
}
\end{align}
where $T^{-(k)}(\hat{\omega})$ denotes the preimage of $\hat{\omega}$ under the ReLU layer $T^{(k)}$.
Clearly, only the parts of $\Gamma^{(k+1)}$ intersecting $\overline{\widehat{S}}_{(I,\emptyset)}$ will contribute to $\Gamma^{(k)}$. Parts of $\Gamma^{(k+1)}$ entirely inside $\widehat{S}_{(I,\emptyset)}$ will be obtained through an affine transformation so any additional complexity of the hypersurface $\Gamma^{(k)}$ is due to the intersections of $\Gamma^{(k+1)}$ with the boundary $\partial \widehat{S}_{(I,\emptyset)}$.

Unlike a shallow network, where every non-empty intersection of $\mathrm{ker}(L)$ with a boundary part $\widehat{S}_{(J,\emptyset)}\in \partial \widehat{\mcS} $ generates one addidional linear piece, the intersection of $\Gamma^{(k+1)}$ with $\widehat{S}_{(J,\emptyset)}$ can yield more than one additional linear piece in $\Gamma^{(k)}$. The reason is that the intersection $\Gamma^{(k+1)}\cap \widehat{S}_{(J,\emptyset)}$ is typically piecewise linear where each linear piece will generate an additional new linear piece in $\Gamma_k$. However, all those linear pieces emerging from the intersection with the same boundary part $\widehat{S}_{(J,\emptyset)}$ will not be independent since they are spanned by the same set of dual basis vectors. 

In effect, the number of linear pieces of the decision boundary $\Gamma$ can be very large in a network with multiple layers, but they are not entirely independent of each other. Even though one layer can induce more than $d$ linear pieces, all of them are spanned by subsets of only $d$ new vectors, namely the dual basis induced by the parameters of $T^{(k)}$, see Definition~\ref{def:dual-basis}. The overall complexity of $\Gamma$ emerges from these dual bases which will span the additional linear pieces from each layer, together with the increasing complexity of the intersections of $\Gamma^{(k)}$ with $\overline{\widehat{S}}_{(I,\emptyset)}$ as we successively proceed through the layers.

\section{Conclusion} \label{sec:conclusion}
We have provided a detailed geometric description of the structure of a fully-connected ReLU layer by introducing a partition of the input space using a dual basis derived from the layer parameters. With this framework, we can describe the action of such a layer as a projection of $\IR^d$ onto a polyhedral cone followed by an invertible affine transformation mapping the cone onto the non-negative orthant $\IR^d_+$. Most of the sectors in our partition will be projected onto lower-dimensional parts of the boundary of the cone, and hence, it is apparent that networks with multiple ReLU layers can contract data efficiently. However, in a classification setting the cones associated with each layer must be constructed such that the sets of points from different classes are not mixed.

With this geometrical description, we can compute preimages of sets in terms of their intersections with $\IR^d_+$ and the dual vectors. The decision boundary for a network with a single hidden layer can be expressed as the preimage of a hyperplane. This allowed us to characterize the complexity of the decision boundary in terms of the number of linear pieces and the geometry as their relative orientation. We can conclude that there is a trade-off between the number of linear pieces and the dependency between them. In particular, maximizing the number of linear pieces will always result in a convex decision boundary.

We have also classified the possible decision boundaries for a fully-connected ReLU network with a single hidden layer by posing mild conditions on the parameters. All such decision boundaries can be mapped by an affine transformation to one of the $d$ canonical decision boundaries we described in detail. Lastly, we briefly discussed the effect of adding more layers to the network. Specifically, the number of linear pieces of a decision boundary of a network can grow very fast with the number of layers, but the majority of these pieces are highly dependent. 

In an upcoming paper, we provide a geometric approximation theory for deep ReLU networks relevant to the binary classification setting. More precisely, we show that a sufficiently regular hypersurface can be approximated by the decision boundary of a deep ReLU network to any desired accuracy. That construction will heavily depend on the geometric description of ReLU layers provided in this paper.

\bigskip
\paragraph{Acknowledgement.} This research was supported in part by the Wallenberg AI, Autonomous Systems and
Software Program (WASP) funded by the Knut and Alice Wallenberg Foundation; the Swedish Research
Council Grants Nos.\  2017-03911,  2021-04925;  and the Swedish
Research Programme Essence.

\bibliographystyle{habbrv}
\footnotesize{
\bibliography{ref_geom}
}

%\bigskip
\bigskip
\noindent
\footnotesize {\bf Authors' addresses:}

\smallskip
\noindent
Jonatan Vallin  \quad \hfill \addressumushort\\
{\tt jonatan.vallin@umu.se}

\smallskip
\noindent
Karl Larsson, \quad \hfill \addressumushort\\
{\tt karl.larsson@umu.se}

\smallskip
\noindent
Mats G. Larson,  \quad \hfill \addressumushort\\
{\tt mats.larson@umu.se}

\end{document}